\documentclass{article}

\usepackage{arxiv}

\usepackage[utf8]{inputenc}
\usepackage[T1]{fontenc}
\usepackage{hyperref}
\usepackage{url}
\usepackage{booktabs}
\usepackage{amsfonts}
\usepackage{nicefrac}
\usepackage{microtype}
\usepackage{multirow}
\usepackage{xcolor}
\usepackage{amsmath,amssymb,amsthm}
\usepackage{siunitx}
\usepackage{algorithmic}
\usepackage{algorithm}
\usepackage{tabularx}
\usepackage{graphicx}
\usepackage{caption}
\usepackage{natbib}
\usepackage[table]{xcolor}
\usepackage{subcaption}
\usepackage{enumitem}

\newtheorem{theorem}{Theorem}
\newtheorem{proposition}[theorem]{Proposition}

\newcommand{\GW}{\operatorname{GW}}
\newcommand{\LGW}{\mathcal{L}_{\mathrm{GW}}}

\newcommand{\minGSGW}{\mathrm{min\text{-}GSGW}}

\title{Min Generalized Sliced Gromov-Wasserstein:\\ A Scalable Path to Gromov–Wasserstein}

\author{
  Ashkan Shahbazi$^{*,1}$ \quad Xinran Liu$^{*,1}$ \quad Ping He$^{1}$ \quad Soheil Kolouri$^{1,2}$ \\[0.5em]
  $^{1}$Department of Computer Science, College of Connected Computing, Vanderbilt University \\
  $^{2}$Department of Electrical and Computer Engineering, Vanderbilt University \\
  \texttt{\{ashkan.shahbazi, xinran.liu, ping.he, soheil.kolouri\}@vanderbilt.edu}
}

\begin{document}
\maketitle

\begingroup
\renewcommand\thefootnote{}
\footnotetext{* Equal contribution.}
\endgroup

\begin{abstract}
We propose min Generalized Sliced Gromov--Wasserstein (\(\minGSGW\)), a sliced
formulation for the Gromov--Wasserstein (GW) problem using expressive
generalized slicers. The key idea is to learn coupled nonlinear slicers that assign compatible push-forward values to both input measures, so that monotone coupling in the projected domain lifts to a transport plan evaluated against the GW objective in the original spaces. The resulting plan induces a GW objective value, and \(\minGSGW\) minimizes this cost directly in the original
spaces. We further show that \(\minGSGW\) is rigid-motion invariant, a crucial
property for geometric matching and shape analysis tasks. Our contributions are
threefold: 1) we introduce generalized slicers into the sliced GW framework,
2) we construct a slicing-based efficient GW transport plan; and 3) we develop an amortized variant that replaces per-instance optimization with a learned slicer for unseen input pairs. We perform experiments on animal mesh matching, horse mesh interpolation, and ShapeNet part transfer. Results show that \(\minGSGW\) produces meaningful geometric correspondences and GW objective values at substantially lower computational cost than existing GW solvers.
\end{abstract}

% keywords can be removed
\keywords{Gromov-Wasserstein, Optimal Transport, Shape Matching}

\section{Introduction}

Modern machine learning increasingly requires comparing objects without a common ambient space, arising in shape analysis, graph matching, biological data integration, and representation comparison, where the relevant signal lies in pairwise intra-domain geometry rather than absolute coordinates. Gromov--Wasserstein (GW) addresses this by comparing metric measure spaces through couplings that preserve internal geometry, making it a central tool for structure-aware matching \citep{journals/focm/Memoli11,pmlr-v48-peyre16}. Despite these strengths, GW remains computationally expensive. In the discrete setting it is a nonconvex quadratic program over couplings, and practical solvers are substantially costlier than classical OT. This has motivated scalable alternatives including entropic, low-rank, semidefinite, and sliced formulations \citep{NEURIPS2019_a9cc6694, pmlr-v162-scetbon22b, chen2023semidefinite, pan2026maxmin}. Sliced GW is especially appealing because it reduces the problem to repeated one-dimensional comparisons admitting $O(n\log n)$ sorting primitives \citep{NEURIPS2019_a9cc6694}. However, this reduction requires care: existing sliced GW methods replace the one-dimensional GW subproblem with monotone matching, which does not solve the one-dimensional GW problem exactly. Counterexamples show that neither the identity nor the anti-identity permutation is optimal for the relevant one-dimensional assignment problem in general \citep{beinert2023assignment,Dumont_2024}. The efficiency of sliced GW therefore comes from limiting the coupling family to those producible by linear projections, not from exactly solving a one-dimensional subproblem \citep{zhang2023duality}. This exposes three coupled limitations of existing sliced GW formulations.
First, the error from monotone matching is tied to the projection family and cannot be eliminated within a purely linear slicing scheme. Second, independently chosen projections for two spaces need not be compatible with a meaningful structural alignment. Third, and perhaps most critically for practical use, the monotone plans produced in the projected space offer no reliability guarantees when lifted back to the original measures: there is no reason in general to expect such a plan to be close to the GW-optimal coupling with respect to the original geometry. As a result, prior sliced GW methods are best understood as alternative distance objectives rather than methods that minimize GW directly. Recent max--min formulations partially address the second issue by coupling slicers adversarially and recovering invariance properties, but they still rely on monotone one-dimensional matchings and do not resolve the plan reliability problem \citep{pan2026maxmin}.

We propose \emph{min Generalized Sliced Gromov--Wasserstein} $(\minGSGW)$,
which replaces linear projections with nonlinear slicers learned through a
shared latent construction with lifting. The key insight is that the limitation
of monotone matching lies not in sorting itself, but in the linear slicer class.
Linear projections can only induce couplings over a narrow family of matchings;
by contrast, nonlinear slicers warp the push-forward ordering so that sorting
the resulting push-forward values induces a coupling in the \emph{original}
metric spaces --- one that can reach regions of the coupling space unavailable
to linear projections. This is the mechanism by which $\minGSGW$ retains $O(n\log n)$ plan
extraction while producing substantially richer couplings than the linear
sliced family. Empirically, these plans frequently achieve GW objective values
competitive with those of more expensive iterative solvers such as Frank--Wolfe
\citep{pmlr-v28-jaggi13} and Sinkhorn \citep{NIPS2013_af21d0c9}, and
sometimes better --- consistent with GW's nonconvexity, where iterative solvers
are not guaranteed to find globally optimal couplings. We first formulate
$\minGSGW$ as a per-instance optimization, then extend it to an amortized
setting where a learned predictor outputs couplings directly for unseen pairs
via a forward pass.

\textbf{Our main contributions are as follows:}
\begin{itemize}
    \item We introduce $\minGSGW$, a generalized sliced GW formulation that couples
    two slicers through a shared map and lifting, inducing transport plans directly
    in the original metric spaces by sorting the resulting push-forward values. The nonlinearity
    of the slicers allows the induced couplings to reach regions of the coupling
    space unavailable to linear projections, yielding lower GW objective values
    within this slicer-induced family.
    \item We establish structural properties of $\minGSGW$, including rigid motion invariance, and optimize the objective with a differentiable sorting relaxation based on LapSum \citep{lapsum2025}.
    \item We develop an amortized variant replacing per-instance optimization with a learned slicer for new input pairs.
    \item We validate on shape matching \citep{10.1145/1015706.1015736}, horse shape interpolation \citep{10.1145/2897824.2925903}, and amortized ShapeNet part segmentation \citep{shapenet2015}, where $\minGSGW$ achieves GW values on par with or better than substantially more expensive iterative solvers.
\end{itemize}

\section{Related Work}

Our work lies at the intersection of sliced GW formulations and sliced OT plan construction. Classical GW solvers use conditional-gradient and entropic local optimization, making the quadratic matching problem practical but requiring iterative dense coupling updates \cite{pmlr-v48-peyre16,10.1145/2897824.2925903,flamary2021pot}. Low-rank, sampled, and relaxation-based methods improve scalability or provide alternative formulations \cite{pmlr-v162-scetbon22b,kerdoncuff2021sampled,chen2023semidefinite,zhang2023duality, chowdhury2021quantizedgromovwasserstein,NEURIPS2019_6e62a992, vay2019fgw}. Sliced GW replaces the full matching problem with one-dimensional projected objectives built from monotone matching \cite{NEURIPS2019_a9cc6694}, though monotone matching does not solve the one-dimensional GW problem exactly \cite{beinert2023assignment}. Max--min formulations address the incompatibility of independently chosen projections and recover stronger invariance properties, but remain distance objectives defined in projected space rather than plan-producing GW minimizations \cite{pan2026maxmin}. A parallel line uses slicing to construct explicit transport plans via monotone one-dimensional matchings, including min-sliced, generalized sliced, expected sliced, and amortized sliced plan formulations \cite{ICLR2025_98db6567,liu2025efficienttransferableoptimaltransport,NEURIPS2023_6f1346ba, rowland2019orthogonal, chapel2026differentiable}. Our method lies at the intersection: like sliced OT plan methods, it constructs couplings from slicer push-forward values via monotone matching; unlike projected sliced GW objectives, it evaluates and optimizes this coupling against the original GW loss in the original metric spaces. $\minGSGW$ thus directly minimizes GW over the family of slicer-induced couplings, yielding an upper bound on GW whose tightness grows with the expressivity of the slicers. The amortized variant uses learning to amortize plan construction across instances rather than to replace the geometric objective.

\section{Preliminaries and Background}
\label{sec:background}

\paragraph{Gromov--Wasserstein.}
Let \((X,d_X,\mu)\) and \((Y,d_Y,\nu)\) be metric measure spaces, with
\(\mu \in \mathcal P_2(X)\) and \(\nu \in \mathcal P_2(Y)\).
Since \(X\) and \(Y\) need not coincide, there is in general no canonical
cross-space cost, and the Wasserstein construction is not directly applicable.
The Gromov--Wasserstein discrepancy instead compares the \emph{internal}
geometries of the two spaces. Its quadratic form is
\begin{equation}
\label{eq:gw}
\GW^2(\mu,\nu)
:=
\inf_{\pi \in \Pi(\mu,\nu)}
\iint_{(X\times Y)^2}
\bigl(d_X(x,x') - d_Y(y,y')\bigr)^2
\,d\pi(x,y)\,d\pi(x',y').
\end{equation}
In the discrete setting, let
\(\mu = \sum_{i=1}^n a_i \delta_{x_i}\) and
\(\nu = \sum_{j=1}^m b_j \delta_{y_j}\)
with \(a\in\Delta_n\), \(b\in\Delta_m\), and define the intra-space distance
matrices \(C^X_{ii'} := d_X(x_i,x_{i'})\) and \(C^Y_{jj'} := d_Y(y_j,y_{j'})\).
The GW loss at a coupling \(\pi\in\Pi(a,b)\) is
\begin{equation}
\label{eq:lgw}
\LGW(\mu,\nu;\pi)
:=
\sum_{i,i',j,j'}
\bigl(C^X_{ii'} - C^Y_{jj'}\bigr)^2\,\pi_{ij}\,\pi_{i'j'},
\end{equation}
and the squared GW distance is
\(\GW^2(\mu,\nu) = \LGW(\mu,\nu;\pi^*)\),
where \(\pi^* := \operatorname*{arg\,min}_{\pi\in\Pi(a,b)}\LGW(\mu,\nu;\pi)\).
This is a nonconvex quadratic program that is NP-hard in general, with a naive \(O(n^4)\) implementation, motivating scalable approximations.

\paragraph{Sliced GW and its limitations.}
With \(X\subset\mathbb{R}^{p}\) and \(Y\subset\mathbb{R}^{q}\),
sliced GW projects to the real line and averages.
Let \(d:=\max(p,q)\) and let
\(\iota_X:\mathbb{R}^{p}\hookrightarrow\mathbb{R}^d\),
\(\iota_Y:\mathbb{R}^{q}\hookrightarrow\mathbb{R}^d\)
be the canonical zero-padding embeddings
(appending \(d-p\) or \(d-q\) zeros, respectively).
For \(\theta\in\mathbb{S}^{d-1}\), define linear projections
\(P_{\theta,X}:=\langle\theta,\iota_X(\cdot)\rangle\)
and \(P_{\theta,Y}:=\langle\theta,\iota_Y(\cdot)\rangle\)
with pushforward measures \((P_{\theta,X})_\#\mu\)
and \((P_{\theta,Y})_\#\nu\) on \(\mathbb{R}\).
The \emph{shared-direction} SGW of \cite{NEURIPS2019_a9cc6694} is
\begin{equation}
\label{eq:sgw_shared}
\operatorname{SGW}_{\mathrm{shared}}(\mu,\nu)
:=
\mathbb{E}_{\theta\sim\sigma}
\Bigl[\LGW\bigl(
  (P_{\theta,X})_\#\mu,\,
  (P_{\theta,Y})_\#\nu;\,
  \pi^\theta_{\mathrm{sort}}
\bigr)\Bigr],
\end{equation}
where \(\pi^\theta_{\mathrm{sort}}\) is the monotone coupling
of the two projected measures (via sorting).

One limitation of this formulation is that it is not rotation-invariant. To mitigate this issue, a naive workaround is to sample projection directions independently for the two spaces. This leads to the \emph{independent-direction} variant: let \(\sigma_X\) and \(\sigma_Y\) denote the uniform distributions on the spheres \(\mathbb{S}^{p-1}\) and \(\mathbb{S}^{q-1}\), respectively, and sample \(\psi\sim\sigma_X\) and \(\phi\sim\sigma_Y\) independently. This yields
\begin{equation}
\label{eq:sgw_indep}
\operatorname{SGW}_{\mathrm{indep}}(\mu,\nu)
:=
\mathbb{E}_{\psi\sim\sigma_X,\,\phi\sim\sigma_Y}
\Bigl[\LGW\bigl(
  (P_\psi)_\#\mu,\,
  (P_\phi)_\#\nu;\,
  \pi^{\psi,\phi}_{\mathrm{sort}}
\bigr)\Bigr],
\end{equation}
where \(P_\psi:=\langle\psi,\cdot\rangle\) and
\(P_\phi:=\langle\phi,\cdot\rangle\) are the standard linear projections
on \(X\) and \(Y\), respectively, and \(\pi^{\psi,\phi}_{\mathrm{sort}}\) is the 1-D monotone coupling between the projections by \(\psi\) and \(\phi\). %The drawback of this workaround is that independent projections break the joint directional correspondence between the two spaces, so the resulting one-dimensional comparisons are less tightly connected to the structural alignments that GW is meant to capture.
The primary limitation is that this formulation violates the identity of indiscernibles.  Pan et al. \cite{pan2026maxmin} address this issue by coupling the two projections through a max-min game. Let \(\Psi:=\{\langle\psi,\cdot\rangle:\psi\in\mathbb{S}^{p-1}\}\)
and \(\Phi:=\{\langle\phi,\cdot\rangle:\phi\in\mathbb{S}^{q-1}\}\)
be the classes of (linear) slicers on \(X\) and \(Y\). Rather than sampling \(\psi\) and \(\phi\) independently and taking the expectation, they solve the following max-min game:
\begin{equation}
\label{eq:maxmin_directed}
\sup_{\psi\in\Psi}\;\inf_{\phi\in\Phi}\;
\LGW((P_\psi)_\#\mu,(P_\phi)_\#\nu;\pi^{\psi,\phi}_{\mathrm{sort}}),
\end{equation}
where the outer player selects \(\psi\) to expose structural discrepancy
and the inner player best-responds with \(\phi\). Symmetrizing yields
\begin{equation}
\label{eq:maxmin_sym}
\max\!\Bigl\{
\sup_{\psi\in\Psi}\inf_{\phi\in\Phi}
  \LGW((P_\psi)_\#\mu,(P_\phi)_\#\nu;\pi^{\psi,\phi}_{\mathrm{sort}}),\;
\sup_{\phi\in\Phi}\inf_{\psi\in\Psi}
  \LGW((P_\phi)_\#\nu, (P_\psi)_\#\mu;\pi^{\phi, \psi}_{\mathrm{sort}})
\Bigr\}.
\end{equation}

While these formulations address some of the shortcomings of naive independent sampling, two fundamental limitations remain common to sliced GW approaches. \textbf{First, the monotone arrangement is in general not the optimal coupling for the one-dimensional GW problem \cite{beinert2023assignment}. Second, they provide no matching/coupling information.}
% -------------------------------------------------------
% OUR METHOD
% -------------------------------------------------------
\section{Method}

\label{sec:method}
We propose \emph{min Generalized Sliced Gromov--Wasserstein} (\(\minGSGW\)), a novel sliced approach to GW that defines a transport plan. It optimizes an objective induced by a transport plan constructed in three steps as shown in ~\ref{fig:teaser}: (1) lift the lower-dimensional measure to the higher-dimensional space via a measurable map, rather than a rigid embedding in \citep{NEURIPS2019_a9cc6694}; (2) apply non-linear slicing to both measures; and (3) compute the transport plan between one-dimensional projections via a flexible non-linear rearrangement, followed by a monotone matching. Importantly, although our method still relies on monotone matching, the use of non-linear rearrangement mitigates the limitation that 1D GW plans cannot be recovered from monotone rearrangements in general.

\begin{figure}[t]
    \centering
    \includegraphics[width=\linewidth]{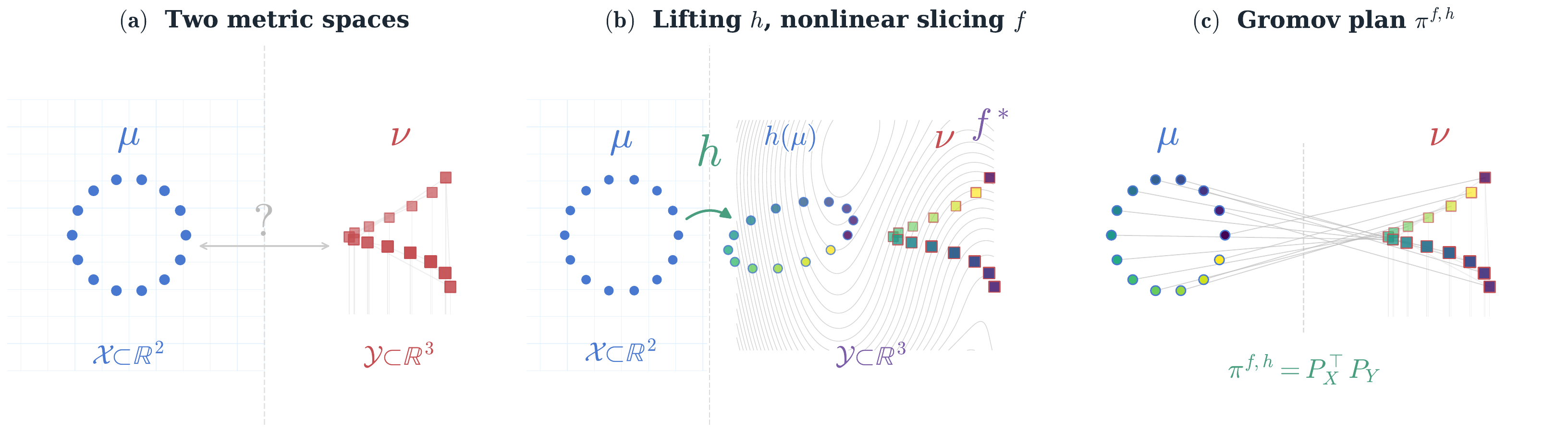}
    \caption{
    \textbf{Overview of \(\minGSGW\).}
    \textbf{(a)}~Two metric measure spaces \(\mu\) on \(\mathcal{X}\subset\mathbb{R}^p\) and \(\nu\) on \(\mathcal{Y}\subset\mathbb{R}^q\) with \(q\!\geq\!p\).
    \textbf{(b)}~A learned lifting \(h:\mathcal{X}\!\to\!\mathcal{Y}\) maps source points into the target domain; a shared nonlinear slicer \(f^*:\mathcal{Y}\!\to\!\mathbb{R}\) then assigns push-forward values to both lifted source points \(h(x_i)\) (circles) and target points \(y_j\) (squares). Level curves of \(f^*\) induce a monotone ordering on both sets.
    \textbf{(c)}~Sorting the two push-forward sequences and matching them into a feasible transport coupling \(\pi^{f,h}\), whose GW objective is minimized over the learned family of slicer-induced couplings.}
    \label{fig:teaser}
\end{figure}

\paragraph{Sliced Gromov--Wasserstein via generalized slicers.}
Let $X\subset\mathbb{R}^p$, $Y\subset \mathbb{R}^q$. Consider discrete measures 
\(\mu=\sum_{i=1}^n a_i\delta_{x_i}\) on $X$ and
\(\nu=\sum_{j=1}^m b_j\delta_{y_j}\) on $Y$, where \(a\in\Delta_n\) and
\(b\in\Delta_m\). Without loss of generality, assume \(p\le q\), so that \(Y\) is the higher-dimensional space. Let \(\mathcal H\) be a class of measurable liftings \(h:X\to Y\), and let $\mathcal{F}$ be the class of slicers \(f:Y\to\mathbb R\) . For \((f,h)\in\mathcal F\times\mathcal H\), define
\[
s_i := (f\circ h)(x_i),\qquad t_j := f(y_j).
\]

Denote by $\pi^*_{f, h}(\mu, \nu)$ the coupling between $\mu$ and $\nu$ that is lifted from the optimal GW coupling between $\{s_i\}_{i=1}^n$ and $\{t_j\}_{j=1}^m$, then we derive an objective

\begin{equation}
    \label{eq: orig obj}
    \inf_{f\in\mathcal{F}, h\in\mathcal{H}}\mathcal{L}_{GW}(\mu, \nu, \pi^*_{f, h})
\end{equation}

However, this formulation does not improve computational efficiency, as it still requires solving a one-dimensional GW problem. To mitigate this issue, we propose to approximate the 1D GW plan by composing a non-linear rearrangement with a monotone coupling.

\paragraph{Approximating the 1D Gromov-Wasserstein solution.}
Instead of solving the 1D GW problem to obtain $\pi^*_{f, h}(\mu, \nu)$, we look for a measurable function $\xi^*:\mathbb{R}\rightarrow\mathbb{R}$ that could be non-linear and achieves

\begin{equation}
    \label{eq:non-linear rearrangement}
    \inf_\xi \mathcal{L}_{GW}((f\circ h)_\#\mu, f_\#\nu, \hat{\pi}_{f, h}^\xi), \qquad\text{(1D problem)}
\end{equation}
for a given pair of $f$ and $h$, where $\hat{\pi}_{f, h}^\xi$ denotes the monotone coupling between 1D measures $\xi_\#((f\circ h)_\#\mu)$ and $\xi_\#(f_\#\nu)$ via sorting. 

% \eva{Justification needed for this approximation}
\begin{proposition}[Representation of the 1D GW plan]
\label{prop: rep 1D GW plan}
Let $\mu=\frac{1}{n}\sum_{i=1}^n \delta_{x_i}, \nu=\frac{1}{m}\sum_{j=1}^m \delta_{y_j}$
be probability measures on \(\mathbb{R}\), with \(n=m\), uniform weights, with the squared Euclidean cost $C^X_{ii'}=d^2_X(x_i, x_{i'})$ and $C^Y_{jj'}=d^2_Y(y_j, y_{j'})$. 
Then there exists an optimal Gromov--Wasserstein plan \(\pi^*\in\Pi(\mu,\nu)\) that is induced by a permutation \(\sigma \in S_n\), i.e.
\[
\pi^* = \frac{1}{n} \sum_{i=1}^n \delta_{(x_i, y_{\sigma(i)})}.
\]

Moreover, there exists a nonlinear map \(\xi:\mathbb{R}\to\mathbb{R}\) such that
the monotone coupling between \(\xi_{\#}\mu\) and \(\xi_{\#}\nu\) induces the
same permutation \(\sigma\). Equivalently,
\[
\pi^*
=
\frac{1}{n}\sum_{i=1}^n \delta_{(x_i,y_{\sigma(i)})}
=
\pi^{\mathrm{mon}}(\xi_{\#}\mu,\xi_{\#}\nu)
\]
after identifying each projected point with its preimage in the original
supports.
\end{proposition}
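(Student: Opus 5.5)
The plan has two parts: show that an optimal plan can be taken to be a permutation (a vertex of the Birkhoff polytope), and then build $\xi$ by hand so that sorting reproduces that permutation.

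\textbf{Step 1: reduce to a concave quadratic.} With uniform weights and $n=m$, $\Pi(\mu,\nu)=\frac1n\mathcal B_n$, where $\mathcal B_n$ is the Birkhoff polytope of doubly stochastic matrices. Expanding the square in $\LGW$ gives
\[
\LGW(\mu,\nu;\pi)=\sum_{i,i'}(C^X_{ii'})^2a_ia_{i'}+\sum_{j,j'}(C^Y_{jj'})^2b_jb_{j'}-2\,\mathrm{tr}\bigl(C^X\pi C^Y\pi^\top\bigr).
\]
The first two terms depend only on the marginals, so they are constant on $\Pi(\mu,\nu)$. It therefore suffices to show that $Q(\pi):=\mathrm{tr}(C^X\pi C^Y\pi^\top)$ is convex along every direction lying in the affine hull of $\Pi(\mu,\nu)$.

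\textbf{Step 2: convexity of $Q$ on the affine hull.} Any direction $\Delta$ in that hull has zero row and column sums. Writing $\Delta=P\Delta P$ with $P=I-\frac1n\mathbf 1\mathbf 1^\top$, we get
\[
Q(\Delta)=\mathrm{vec}(\Delta)^\top\bigl((PC^YP)\otimes(PC^XP)\bigr)\mathrm{vec}(\Delta).
\]
Squared Euclidean distance matrices are conditionally negative definite: for $v\perp\mathbf 1$, $v^\top C^X v=-2\bigl(\sum_i v_ix_i\bigr)^2\le 0$. Hence $PC^XP\preceq 0$ and $PC^YP\preceq 0$, so their Kronecker product is positive semidefinite. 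Consequently $Q$ is convex on the affine hull and $\LGW=\text{const}-2Q$ is concave there.

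\textbf{Step 3: conclude the first claim.} A concave function on a compact polytope attains its minimum at an extreme point. By Birkhoff--von Neumann, the extreme points of $\Pi(\mu,\nu)$ are exactly $\frac1n$ times permutation matrices. This gives the optimal plan $\pi^*=\frac1n\sum_i\delta_{(x_i,y_{\sigma(i)})}$.

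\textbf{Step 4: construct $\xi$.} Given $\sigma$, set $\xi(x_i):=i$ and $\xi(y_{\sigma(i)}):=i$, and extend $\xi$ arbitrarily (for instance piecewise linearly) elsewhere on $\mathbb R$. Then $\xi_\#\mu$ and $\xi_\#\nu$ are both $\frac1n\sum_i\delta_i$. Their monotone coupling matches rank $i$ to rank $i$, which after identifying each projected point with its preimage pairs $x_i$ with $y_{\sigma(i)}$. This is exactly $\pi^*$.

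\textbf{Main obstacle: well-definedness of $\xi$.} This is where I expect the real difficulty, and it comes from coincident support points.
\begin{itemize}
\item \emph{Repeated $x$'s or repeated $y$'s.} Here the plan can be rearranged among the copies without changing the cost. I would choose $\sigma$ so that equal $x$'s are sent to a consecutive block of indices, which keeps $\xi$ single-valued on the $x$'s.
\item \emph{Some $x_i$ equals some $y_j$ on $\mathbb R$.} The cleaner fix is to note that $\mu$ and $\nu$ live on different copies of $\mathbb R$. I would then define $\xi$ on the disjoint union, which is what the phrase ``after identifying each projected point with its preimage'' permits. Alternatively, one can assume distinct, disjoint supports for this step.
\end{itemize}
I would state this caveat explicitly rather than hide it.
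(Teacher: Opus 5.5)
Your argument is correct and follows the same two-part outline as the paper's proof. The difference is that you prove the first part rather than cite it.

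\textbf{First part.} The paper gets the permutation structure in one line from Vayer et al.'s equivalence between GW and the Gromov--Monge problem (uniform weights, squared Euclidean costs). You derive it directly, and each step checks out: the expansion into marginal constants minus $2\,\mathrm{tr}(C^X\pi C^Y\pi^\top)$, the identity $\Delta=P\Delta P$ for zero-margin directions, $v^\top C^X v=-2(\sum_i v_ix_i)^2$ for $v\perp\mathbf 1$, positive semidefiniteness of the Kronecker product of two negative semidefinite matrices, and Birkhoff--von Neumann. This is essentially the mechanism behind the cited theorem. What your route buys is self-containedness and transparency: it shows the first claim depends only on conditional negative definiteness of the costs, not on one-dimensionality.

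\textbf{Second part.} Your choice $\xi(x_i)=i=\xi(y_{\sigma(i)})$ is the paper's construction made explicit. Your well-definedness caveat points to a real gap that the paper's proof skips when it interpolates $\xi$ on $\{x_i\}\cup\{y_j\}\subset\mathbb{R}$. Take $\mu=\frac13(\delta_0+\delta_1+\delta_5)$ and $\nu=\frac13(\delta_0+\delta_1+\delta_{-4})$. Their unique optimal plan is the zero-cost one induced by $x\mapsto 1-x$: the three points have distinct distance profiles, and zero cost forces a permutation. That plan sends $0\mapsto 1$ and $1\mapsto 0$. So the $x$-chain and the $y$-chain demand opposite orders of $\xi(0)$ and $\xi(1)$, and no single $\xi$ on the common line can realize the plan by sorting. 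Your fix is therefore needed for the statement to hold: define $\xi$ on the disjoint union of two copies of $\mathbb{R}$, or assume distinct and disjoint supports.

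\textbf{Repeated atoms.} This is where your sketch is thin. Block relabelling works when only one side has repeated points. When both sides do, reshuffling copies does not change the plan as a measure, so the construction is not automatic. The clean fix is to assume distinct support points, as the paper implicitly does.
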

\begin{proof}
    By \citep{NEURIPS2019_a9cc6694}[Theorem 3.2], the GW problem is equivalent to the Gromov-Monge problem \citep{memoli2018gromov}:
    $$GM_2(\mu, \nu)=
\min_{\sigma \in S_n}
\frac{1}{n^2}
\sum_{i,j}
\left| C^X(x_i, x_j) - C^Y\big(y_{\sigma(i)}, y_{\sigma(j)}\big) \right|^2$$

Therefore, the GW plan is induced by a permutation map $\sigma^*$. Now $\xi:\mathbb{R}\rightarrow\mathbb{R}$ can be constructed by interpolating on the finite set \(\{x_i\}_{i=1}^n \cup \{y_j\}_{j=1}^n\), assigning values that realize

$$\xi(x_1) < \xi(x_2) < \cdots < \xi(x_n); \text{and}$$
$$
    \xi(y_{\sigma^*(1)}) < \xi(y_{\sigma^*(2)}) < \cdots < \xi(y_{\sigma^*(n)}).
$$
Hence $\pi^*=\pi^{\mathrm{mon}}(\xi_{\#}\mu,\xi_{\#}\nu)$.
\end{proof}
Consequently, \eqref{eq:non-linear rearrangement} recovers the 1D GW distance under the assumptions in Proposition \ref{prop: rep 1D GW plan}.

Now, rather than nesting \eqref{eq:non-linear rearrangement} into \eqref{eq: orig obj} as a bilevel optimization problem, we propose to jointly optimize $f, h$ and $\xi$, i.e., the optimization problem becomes 
\begin{equation}
    \label{eq:obj}
    \inf_{f, h, \xi}\mathcal{L}_{GW}(\mu, \nu, \pi^\xi_{f, h})
\end{equation}

where $\pi^\xi_{f, h}$ is the lifted plan from $\hat{\pi}_{f, h}^\xi$. 

\paragraph{Absorbing the 1D rearrangement into the slicer.} Observe that $\pi^\xi_{f, h}$ corresponds to the monotone coupling between $(\xi\circ f\circ h)_\# \mu$ and $(\xi\circ f)_\# \nu$. Since $\xi$ and $f$ only appear through their composition,  we treat $\xi\circ f$ as a single generalized slicer, i.e., the non-linearity of $\xi:\mathbb{R}\rightarrow\mathbb{R}$ can be absorbed into the generalized slicer $f:\mathbb{R}^d\rightarrow\mathbb{R}$. With a slight abuse of notation, we denote this composition $\xi\circ f$ by $f: \mathbb{R}^d\rightarrow\mathbb{R}$ as this relabeling does not affect the formulation under the assumption that such composition keeps the underlying function classes unchanged.

\subsection{min Generalized Sliced Gromov--Wasserstein}
With the above in place, we now introduce the \emph{min Generalized Sliced Gromov--Wasserstein} (\(\minGSGW\)) problem:

\begin{equation}
    \label{eq:obj}
    \minGSGW(\mu,\nu)
:=\inf_{f, h}\mathcal{L}_{GW}(\mu, \nu, \pi^{\text{mon}}_{f, h})
\end{equation}

where $\pi^{\text{mon}}_{f, h}$ denotes the lifted plan from the 1D monotone coupling between $(f\circ h)_\#\mu$ and $f_\#\nu$. See Figure \ref{fig:teaser}.
In the following, we provide some properties of \(\minGSGW\).

% \paragraph{Issue 1 and the role of nonlinearity.}
% Nonlinearity enlarges the restricted score-induced family
% \(\{\pi^{f,h}: f\in\mathcal F,\;h\in\mathcal H\}\subset\Pi(a,b)\), thereby
% increasing the expressivity of the admissible sorted plans. Nonlinear scalar
% maps and liftings can therefore reduce the approximation error introduced by
% restricting GW to score-induced couplings and can tighten the upper bound in
% \eqref{eq:upper_bound_mingsgw_min}. In favorable regimes, this family may
% contain an optimal GW plan, making the bound tight, although no such guarantee
% holds in general.

% \paragraph{Issue 2 is resolved by construction.}
% The scalarizations on \(X\) and \(Y\) are not chosen independently. Instead,
% both scores are induced by the same scalar map \(f:Y\to\mathbb R\), with the
% \(X\)-scores obtained through the lifting \(h:X\to Y\):
% \[
% x_i\mapsto (f\circ h)(x_i),
% \qquad
% y_j\mapsto f(y_j).
% \]
% Thus, the two orderings are coupled by design, avoiding the incompatibility
% that can arise from independently chosen one-dimensional scalarizations.

\begin{proposition}[Rigid motion invariance]
\label{prop:rigid}
Let \(\mathcal E(d)\) denote the Euclidean group acting on \(\mathbb R^d\). Let $X\subset\mathbb{R}^p$ and $Y\subset\mathbb{R}^q$. And let \(\mu=\sum_{i=1}^n a_i\delta_{x_i}\) be a discrete measure on $X$ and
\(\nu=\sum_{j=1}^m b_j\delta_{y_j}\) on $Y$, where \(a\in\Delta_n\) and
\(b\in\Delta_m\). Assume that \(\mathcal F\) and \(\mathcal H\) are stable under rigid motions:
for all \((f,h)\in\mathcal F\times\mathcal H\),
\(g_X\in\mathcal E(p)\), and \(g_Y\in\mathcal E(q)\),
\[
f\circ g_Y^{-1}\in\mathcal F,
\qquad
g_Y\circ h\circ g_X^{-1}\in\mathcal H .
\]
Then
\[
\minGSGW(g_{X\#}\mu,g_{Y\#}\nu)
=
\minGSGW(\mu,\nu).
\]
\end{proposition}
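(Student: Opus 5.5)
The plan is to build a bijection between admissible pairs $(f,h)$ for $(\mu,\nu)$ and admissible pairs $(f',h')$ for the transformed measures. We then check that corresponding pairs produce the same coupling, up to relabeling of support points, and the same GW loss. Write $\mu'=g_{X\#}\mu=\sum_i a_i\delta_{g_X(x_i)}$ and $\nu'=g_{Y\#}\nu=\sum_j b_j\delta_{g_Y(y_j)}$. Since $g_X,g_Y$ are rigid motions, the intra-space distance matrices coincide: $C^{X'}_{ii'}=\|g_X(x_i)-g_X(x_{i'})\|=\|x_i-x_{i'}\|=C^X_{ii'}$, and likewise $C^{Y'}=C^Y$. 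Hence for any $\pi\in\Pi(a,b)$, viewed as an $n\times m$ matrix indexed by $(i,j)$, we have $\LGW(\mu',\nu';\pi)=\LGW(\mu,\nu;\pi)$ by \eqref{eq:lgw}.

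Given $(f,h)\in\mathcal F\times\mathcal H$, set $f':=f\circ g_Y^{-1}$ and $h':=g_Y\circ h\circ g_X^{-1}$. By the stability assumption both lie in $\mathcal F\times\mathcal H$. The key computation is that the push-forward scores are unchanged:
\[
s_i'=(f'\circ h')(g_X(x_i))=f(g_Y^{-1}(g_Y(h(x_i))))=(f\circ h)(x_i)=s_i,\qquad t_j'=f'(g_Y(y_j))=f(y_j)=t_j .
\]
The one-dimensional measures $(f'\circ h')_\#\mu'$ and $f'_\#\nu'$ are therefore identical, with the same atoms and weights indexed by $i$ and $j$, to $(f\circ h)_\#\mu$ and $f_\#\nu$. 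So the monotone coupling is the same matrix in index space. Lifting it back gives $\pi^{\mathrm{mon}}_{f',h'}$ with the same entries $\pi_{ij}$, now placed on $(g_X(x_i),g_Y(y_j))$. Combined with the first step, $\LGW(\mu',\nu';\pi^{\mathrm{mon}}_{f',h'})=\LGW(\mu,\nu;\pi^{\mathrm{mon}}_{f,h})$. Taking the infimum over $(f,h)$ gives $\minGSGW(\mu',\nu')\le\minGSGW(\mu,\nu)$.

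For the reverse inequality, apply the same argument with $g_X^{-1},g_Y^{-1}$, which are also rigid motions, to the pair $(\mu',\nu')$. Equivalently, note that $(f,h)\mapsto(f\circ g_Y^{-1},\,g_Y\circ h\circ g_X^{-1})$ is a bijection of $\mathcal F\times\mathcal H$, with inverse obtained by using $g_Y^{-1},g_X^{-1}$.

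The main subtlety I expect is ties in the sorted scores. When several $s_i$ or $t_j$ coincide, the monotone coupling, and the associated north-west-corner rule for non-uniform weights, depends on a tie-breaking convention. I would handle this by fixing a convention that depends only on the index labels and the score values, for example a stable sort by index. Since the scores and weights are literally identical in the two problems, the resulting index-space coupling is then identical, and no further argument is needed.
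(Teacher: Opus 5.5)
Your proposal is correct and is essentially the paper's proof. Both reparametrize via $(f\circ g_Y^{-1},\, g_Y\circ h\circ g_X^{-1})$, observe that the projected scores and the intra-space distance matrices are unchanged, take the infimum, and obtain the reverse inequality by applying the same argument with $g_X^{-1}, g_Y^{-1}$. Your remark about fixing an index-based tie-breaking convention is a small added precision that the paper leaves implicit.
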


\begin{proof}
We start by showing $\minGSGW(g_{X\#}\mu,g_{Y\#}\nu)\le\minGSGW(\mu,\nu).$

For any given \((f,h)\in\mathcal F\times\mathcal H\), let
\(\tilde f:=f\circ g_Y^{-1}\) and
\(\tilde h:=g_Y\circ h\circ g_X^{-1}\). By stability of $\mathcal{F}$ and $\mathcal{H}$,
\((\tilde f,\tilde h)\in\mathcal F\times\mathcal H\). For any support points \(\tilde x=g_Xx\) and \(\tilde y=g_Yy\) in the pushforward measures $g_{X\#}\mu$ and $g_{Y\#}\nu$, we have $x=g_X^{-1}\tilde{x}$ and $y=g_Y^{-1}\tilde{y}$ by the invertibility of $g_X$ and $g_Y$. Hence
\[
\tilde f(\tilde y)=f(y),
\qquad
\tilde f(\tilde h(\tilde x))=f(h(x)).
\]
% Hence the transformed and original score systems are identical, so the induced
% hard plans, and likewise their LapSum relaxations, agree up to the natural
% relabeling of points. Since rigid motions preserve intra-space distances, the
% corresponding GW objective value is unchanged. Taking the infimum gives one
% inequality; applying the same argument to \(g_X^{-1}\) and \(g_Y^{-1}\) gives
% the reverse inequality. Thus equality holds. The full proof is provided in
% Appendix~\ref{app:rigid-proof}.
That is, \(\tilde f\) and \(\tilde h\) are chosen so that the projected values are preserved pointwise. Hence, the induced one-dimensional monotone coupling is the same in both constructions and the lifted coupling matrices coincide: $\pi^\text{mon}_{\tilde f, \tilde h}=\pi^\text{mon}_{f, h}$. Besides, the pairwise distances inside each space $C^X$ and $C^Y$ are preserved by rigid motions, so $\tilde f, \tilde h$ will result in the same objective value as $f, h$: 
$$\mathcal{L}_{GW}(g_{X\#}\mu,g_{Y\#}\nu, \pi^\text{mon}_{\tilde f, \tilde h})=\mathcal{L}_{GW}(\mu,\nu, \pi^\text{mon}_{f, h}).$$
As this applies to any $(f, h)\in\mathcal{F}\times\mathcal{H}$, we conclude
$$\minGSGW(g_{X\#}\mu,g_{Y\#}\nu)\le\minGSGW(\mu,\nu).$$

Similarly, we can obtain $\minGSGW(\mu,\nu)\le \minGSGW(g_{X\#}\mu,g_{Y\#}\nu)$, and therefore $\minGSGW(\mu,\nu)= \minGSGW(g_{X\#}\mu,g_{Y\#}\nu).$

\end{proof}

\paragraph{Relation to GW.}
Since each \(\pi^{f,h}\) is a feasible coupling for uniform empirical measures,
\(\minGSGW\) is an upper-bound approximation to GW in that setting. Let
\(\pi^*\in\operatorname*{arg\,min}_{\pi\in\Pi(a,b)}
\LGW(\mu,\nu;\pi)\) be an optimal GW coupling. Then, for every
\((f,h)\in\mathcal F\times\mathcal H\),
\(\LGW(\mu,\nu;\pi^*)\le \LGW(\mu,\nu;\pi^{f,h})\). Taking the infimum over
\((f,h)\in\mathcal F\times\mathcal H\) gives
\begin{equation}
\label{eq:upper_bound_mingsgw_min}
\GW^2(\mu,\nu)
=
\LGW(\mu,\nu;\pi^*)
\le
\inf_{f\in\mathcal F,\;h\in\mathcal H}
\LGW\!\bigl(\mu,\nu;\pi^{f,h}\bigr)
=
\minGSGW(\mu,\nu).
\end{equation}
If the restricted family
\(\{\pi^{f,h}: f\in\mathcal F,\;h\in\mathcal H\}\) contains an optimal GW plan,
then the bound is tight.

\subsection{Numerical Implementations}
Let $P_X^{f,h}\in\mathbb{R}^{n\times n}$ and $P_Y^f\in\mathbb{R}^{m\times m}$ be the permutation matrices for $\sigma_X^{f,h}\in S_n$ and $\sigma_Y^f\in S_m$ that sort $s=(s_1,\ldots,s_n)$ and $t=(t_1,\ldots,t_m)$ in nondecreasing order. The transport plan is obtained by monotone matching of the sorted values. For unequal cardinalities, let $T_{n,m}\in\mathbb{R}_+^{n\times m}$ be the canonical monotone interpolation matrix with entries
\[
(T_{n,m})_{ij}:=\lambda\!\left(\Bigl[\tfrac{i-1}{n},\tfrac{i}{n}\Bigr]\cap\Bigl[\tfrac{j-1}{m},\tfrac{j}{m}\Bigr]\right),
\]
where $\lambda$ is the one-dimensional Lebesgue measure. Then $T_{n,m}\mathbf{1}_m=\tfrac{1}{n}\mathbf{1}_n$ and $T_{n,m}^\top\mathbf{1}_n=\tfrac{1}{m}\mathbf{1}_m$, so $T_{n,m}$ preserves uniform marginals while interpolating monotonically between sorted empirical grids. The hard plan induced by $(f,h)$ is
\begin{equation}\label{eq:hard_plan_general}
\pi^{f,h}:=(P_X^{f,h})^\top T_{n,m}\,P_Y^f,
\end{equation}
and feasibility follows immediately: $\pi^{f,h}\mathbf{1}_m=\tfrac{1}{n}\mathbf{1}_n$ and $(\pi^{f,h})^\top\mathbf{1}_n=\tfrac{1}{m}\mathbf{1}_m$, so $\pi^{f,h}\in\Pi(\tfrac{1}{n}\mathbf{1}_n,\tfrac{1}{m}\mathbf{1}_m)$. When $n=m$, $T_{n,n}=\tfrac{1}{n}I_n$ so \eqref{eq:hard_plan_general} simplifies to $\pi^{f,h}=\tfrac{1}{n}(P_X^{f,h})^\top P_Y^f$. The scores determine the coupling, while the loss is evaluated on the original intra-space costs $C^X$ and $C^Y$. Our discrepancy is
\begin{equation}\label{eq:min-GSGW}
\minGSGW(\mu,\nu):=\inf_{f\in\mathcal{F},\,h\in\mathcal{H}}\LGW\!\bigl(\mu,\nu;\pi^{f,h}\bigr),
\end{equation}
minimizing the GW objective over score-induced couplings rather than all of $\Pi(a,b)$.

\paragraph{Differentiable relaxation.}
Since $P_X^{f,h}$ and $P_Y^f$ are not differentiable, during training we replace them with soft sorting matrices $P_{X,\tau}^{f,h}\in[0,1]^{n\times n}$ and $P_{Y,\tau}^f\in[0,1]^{m\times m}$ via a differentiable relaxation such as LapSum~\citep{lapsum2025}; as $\tau$ is annealed these converge to the hard permutations. The soft plan is $\pi_\tau^{f,h}:=(P_{X,\tau}^{f,h})^\top T_{n,m}\,P_{Y,\tau}^f$, reducing to $\pi_\tau^{f,h}=\tfrac{1}{n}(P_{X,\tau}^{f,h})^\top P_{Y,\tau}^f$ when $n=m$. If $P_{X,\tau}^{f,h}$ and $P_{Y,\tau}^f$ are doubly stochastic, $\pi_\tau^{f,h}$ inherits the correct uniform marginals. During training we optimize
\begin{equation}\label{eq:soft-minGSGW}
\inf_{f\in\mathcal{F},\,h\in\mathcal{H}}\LGW\!\bigl(\mu,\nu;\pi_\tau^{f,h}\bigr),
\end{equation}
and at inference we use the hard plan $\pi^{f,h}$.

\subsection{Computational complexity}

Our method induces a transport plan by sorting slicer push-forward values,
costing $O(n\log n)$ excluding the slicer forward pass, with the same
construction used during training via the differentiable LapSum relaxation.
Sliced baselines range from $O(Ln\log n)$ (SGW) to $O(L^2 n\log n)$ (MSGW),
while dense GW solvers (POT-GW, entropic GW) incur cubic per-iteration cost due
to the quadratic GW tensor contraction. Figure~\ref{fig:runtime_complexity_and_benchmark}
reports wall-clock runtimes on an RTX A6000 averaged over $10$ runs, confirming
that our method scales most favorably among such methods after its training.

\begin{figure}[t]
\centering

\begin{minipage}[t]{0.42\linewidth}
\vspace{0pt}
\centering
\small
\setlength{\tabcolsep}{5pt}
\renewcommand{\arraystretch}{1.15}
\begin{tabular}{lc}
\toprule
\textbf{Method} & \textbf{Runtime complexity} \\
\midrule
POT-GW 
& NP-hard; $\mathcal{O}(n^4); \mathcal{O}(n^3)$ \\

Sinkhorn 
& $\mathcal{O}(n^4); \mathcal{O}(n^3)$ \\

SGW 
& $\mathcal{O}(L n\log n)$ \\

MSGW 
& $\mathcal{O}(L^2 n\log n)$ \\

RI-SGW 
& $\mathcal{O}\!\left(n_{\mathrm{iter}}\!\left(Ln(p+\log n)+p^3\right)\right)$ \\

\rowcolor{green!20}
\textbf{Ours} 
& $\mathbf{\mathcal{O}(n\log n)}$ \\
\bottomrule
\end{tabular}

\vspace{2pt}
\subcaption*{Runtime complexity and empirical scaling. Wall-clock times are averaged over 10 runs for \(N\in\{100:5000\}\). MSGW runs out of memory beyond $N=1000$, while our method has the lowest complexity and favorable scaling.}
\label{fig:runtime_complexity_table}
\end{minipage}
\hfill
\begin{minipage}[t]{0.56\linewidth}
\vspace{0pt}
\centering
\includegraphics[width=\linewidth]{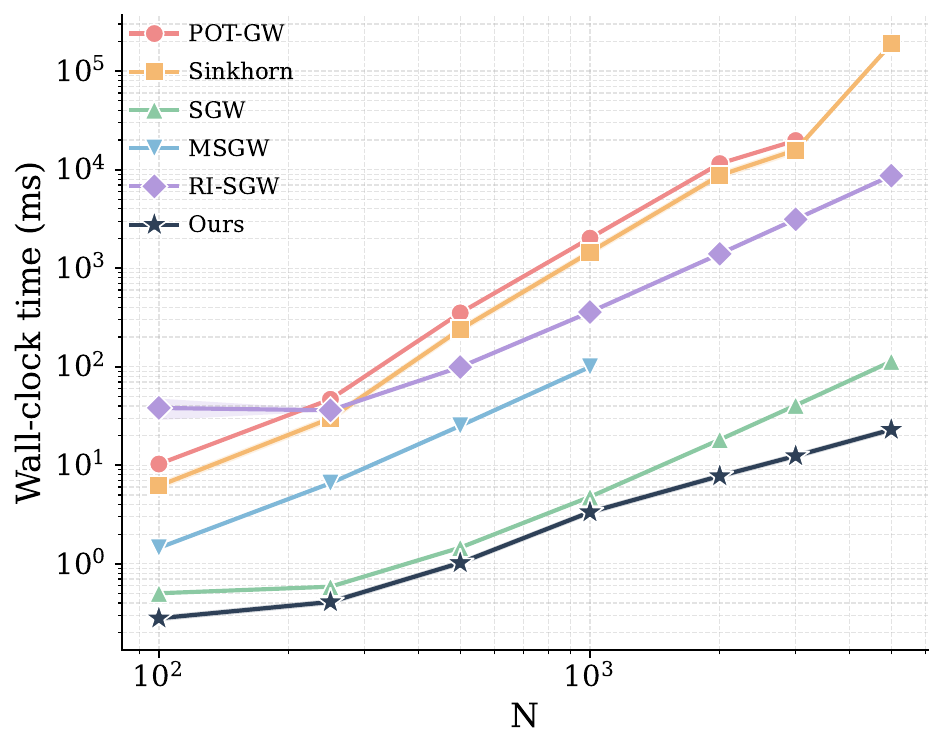}

\vspace{2pt}
% \subcaption{Measured wall-clock time.}
\label{fig:runtime_benchmark}
\end{minipage}

\captionsetup{labelformat=empty,textformat=empty}
\caption{Runtime comparison.}
\label{fig:runtime_complexity_and_benchmark}
\captionsetup{labelformat=default,textformat=simple}
\vspace{-0.8in}
\end{figure}

\section{Experiments}

\label{sec:experiments}

We evaluate \(\minGSGW\) on animal mesh matching, horse shape interpolation, and amortized ShapeNet matching. The results are averaged over three runs. Toy correspondence experiments are deferred to Appendix~\ref{app:toy_correspondences}, and implementation details, hyperparameters, and reproducibility information are given in Appendix~\ref{app:exp_details}. Full ablations over linear vs.\ nonlinear slicers and independent vs.\
dependent slicers are deferred to Appendix~\ref{app:ablation}; the soft
sorting temperature is annealed during training across all variants.

\subsection{Realistic Shape Matching}

Following \citep{chen2023semidefinite}, we evaluate on a realistic shape matching benchmark derived
from \citep{10.1145/1015706.1015736}, where each shape is represented by a geodesic distance matrix
with uniform weights. Since each method optimizes a different surrogate of the GW objective,, we instead evaluate via \textit{geodesic error} --- the Princeton
protocol metric \citep{10.1145/2897824.2925903}: the mean normalized geodesic distance between
predicted and ground-truth correspondences, which directly measures geometric accuracy independent
of each method's internal objective. We compare against POT, Sinkhorn, LR-GW
\citep{pmlr-v162-scetbon22b}, SDP-GW \citep{chen2023semidefinite}, and SaGroW
\citep{kerdoncuff2021sampled}, reporting geodesic error and runtime in Table~\ref{tab:gw_pot_ours}.
\(\minGSGW\) achieves the lowest geodesic error on all six pairs while remain fast; see also Figure~\ref{fig:shape_correspondence} in
Appendix~\ref{app:exp_details}.

\begin{table}[ht!]
\centering
\scriptsize
\caption{Geodesic error (Geo.) and forward runtime in seconds (Time) on realistic mesh pairs
(all $\downarrow$). H, E, C denote Horse, Elephant, Cat.
LR-GW: \citep{pmlr-v162-scetbon22b}; SDP-GW: \citep{chen2023semidefinite}; SaGroW: \citep{kerdoncuff2021sampled}.}
\setlength{\tabcolsep}{5pt}
\renewcommand{\arraystretch}{1.08}
\resizebox{0.95\linewidth}{!}{%
\begin{tabular}{lrr|rr|rr|rr|rr|rr}
\toprule
& \multicolumn{2}{c|}{H--H}
& \multicolumn{2}{c|}{E--E}
& \multicolumn{2}{c|}{C--C}
& \multicolumn{2}{c|}{H--E}
& \multicolumn{2}{c|}{C--H}
& \multicolumn{2}{c}{C--E} \\
\cmidrule(lr){2-3}\cmidrule(lr){4-5}\cmidrule(lr){6-7}
\cmidrule(lr){8-9}\cmidrule(lr){10-11}\cmidrule(lr){12-13}
Method
& Geo. & Time & Geo. & Time & Geo. & Time
& Geo. & Time & Geo. & Time & Geo. & Time \\
\midrule
POT
& 0.112 & 1.82 & 0.103 & 1.91 & 0.079 & 1.76
& 0.181 & 1.88 & 0.208 & 1.80 & 0.189 & 1.86 \\
Sink.
& 0.178 & 0.41 & 0.194 & 0.43 & 0.138 & 0.39
& 0.247 & 0.42 & 0.281 & 0.40 & 0.236 & 0.41 \\
LR-GW
& 0.143 & 0.87 & 0.157 & 0.94 & 0.103 & 0.81
& 0.213 & 0.89 & 0.241 & 0.84 & 0.207 & 0.91 \\
SDP-GW
& 0.128 & 15.7 & 0.134 & 17.4 & 0.091 & 13.6
& 0.196 & 16.3 & 0.224 & 14.7 & 0.193 & 16.6 \\
SaGroW
& 0.167 & 0.31 & 0.182 & 0.34 & 0.124 & 0.28
& 0.231 & 0.32 & 0.264 & 0.30 & 0.221 & 0.33 \\
\midrule
Ours
& \textbf{0.079} & \textbf{0.08}
& \textbf{0.091} & \textbf{0.08}
& \textbf{0.058} & \textbf{0.07}
& \textbf{0.138} & \textbf{0.08}
& \textbf{0.162} & \textbf{0.07}
& \textbf{0.124} & \textbf{0.08} \\
\bottomrule
\end{tabular}}
\label{tab:gw_pot_ours}
\end{table}

\subsection{Horse mesh interpolation.}

We test whether learned couplings support downstream geometry transfer via
barycentric interpolation between consecutive horse meshes, comparing POT GW,
MSGW, and our method. Figure~\ref{fig:horse_ot_barycentric_interp} shows that,
although POT achieves lower GW objective values, our plans still yield plausible
dense correspondences and smooth intermediate deformations. This highlights a
key distinction: global GW optimality does not imply geometric usefulness, and
the slicer-induced plan family is expressive enough to preserve large-scale
structure for interpolation.

\begin{figure*}[t]
    \centering
    \includegraphics[width=\textwidth]{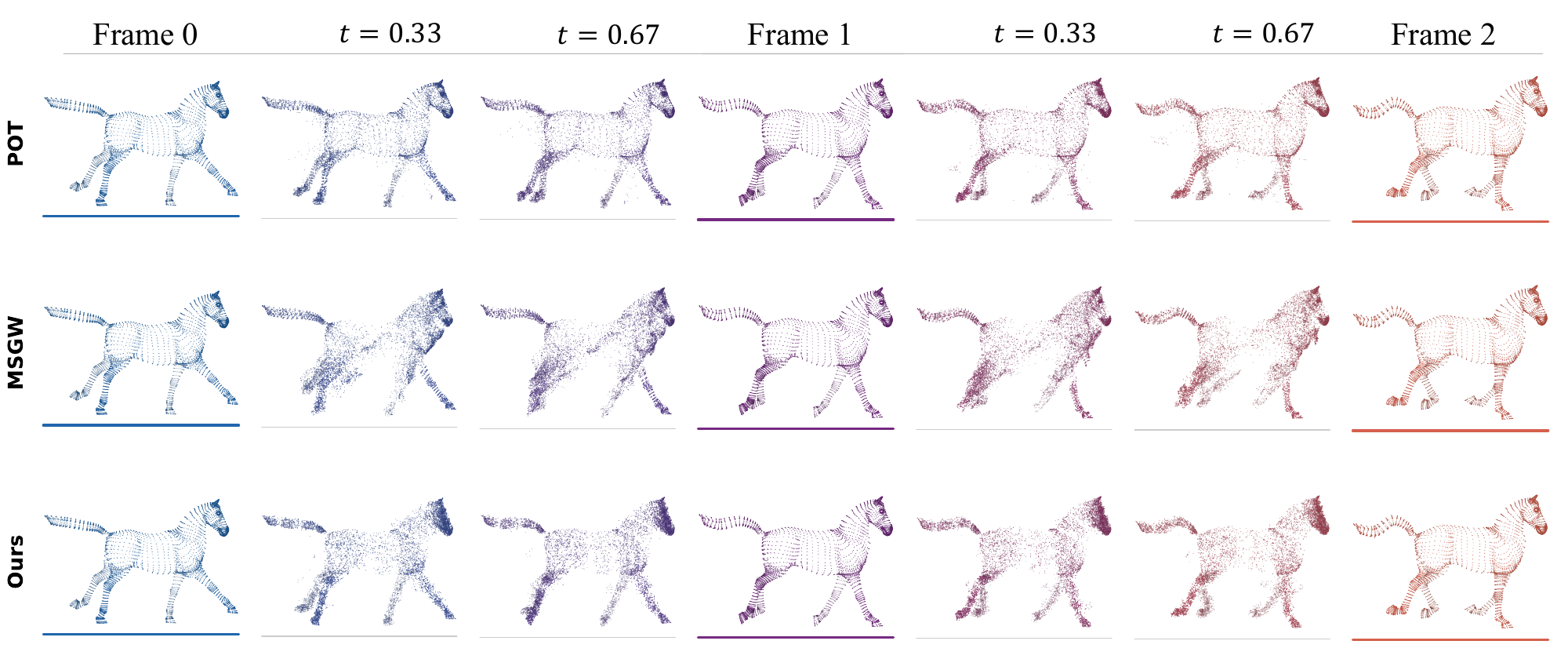}
    \caption{
        \textbf{Horse mesh interpolation from GW couplings.}
        OT barycentric interpolation between consecutive horse meshes at $t=0.33$ and $t=0.67$.
        POT achieves lower GW values $(8.257\times10^{-3}$ for $0\!\to\!1$, $9.714\times10^{-3}$ for $1\!\to\!2)$,
        yet our method still produces smooth, geometrically coherent deformations
        $(2.884\times10^{-2}$ for $0\!\to\!1$, $4.622\times10^{-2}$ for $1\!\to\!2)$.}
    \label{fig:horse_ot_barycentric_interp}
\end{figure*}

\subsection{Amortized Unsupervised Matching for ShapeNet Part Segmentation}

\label{sec:shapenet_amortized}
We study amortized unsupervised matching on ShapeNet part segmentation \citep{shapenet2015}.
Given two point clouds \(X=\{x_i\}_{i=1}^n\) and \(Y=\{y_j\}_{j=1}^m\),
we train a neural matcher that predicts, in a single forward pass, the
push-forward values of \(\minGSGW\) and the induced soft coupling
\(\pi_\tau^\theta(X,Y)\in\Pi(\mu,\nu)\).
Training uses no part labels; part annotations enter only at test time
to report label-transfer accuracy.
\paragraph{Structural constraints.}
For the amortized plan \(G_\theta(X,Y)\) to be a well-formed slicer-induced
coupling, it must satisfy three properties that mirror the non-amortized
formulation (w.l.o.g we assume $d_X = d_y$ here):
\begin{equation}
\label{eq:amortized_constraints}
G_\theta(X,X)=\tfrac{1}{n}I_n,
\qquad
G_\theta(Y,X)=G_\theta(X,Y)^\top,
\qquad
G_\theta(TX,TY)=G_\theta(X,Y)
\quad\forall\,T\in\mathcal{E}(d),
\end{equation}
together with permutation equivariance
\(G_\theta(PX,QY)=P\,G_\theta(X,Y)\,Q^\top\)
for all permutation matrices \(P,Q\).
The identity constraint enforces that a shape matched against itself
recovers the diagonal plan.
Symmetry ensures the matching is consistent under argument swap.
Rigid-motion invariance means the coupling depends only on the intrinsic
geometry of the two shapes, not their ambient orientation.
The architecture is designed so that each constraint is satisfied by
construction rather than by regularization, as we describe next.

\paragraph{Intrinsic tokenisation.}
To preserve rigid-motion invariance at the input level, each point is
encoded by its sorted squared-distance profile within the same point
cloud.  Concretely, for \(x_i\in X\) define
\(\mathcal{D}_i^X=\mathrm{sort}\bigl(\{\|x_i-x_k\|^2:k\neq i\}\bigr)\in\mathbb{R}^{n-1}\),
and set \(\phi_i^X=\rho(\mathcal{D}_i^X)\) where \(\rho\) is a shared
MLP encoder; analogously for \(Y\).  Sorting provides a canonical
ordering that makes each token invariant to permutations of the
remaining points, so the token collections
\(\Phi^X=(\phi_1^X,\dots,\phi_n^X)\) and
\(\Phi^Y=(\phi_1^Y,\dots,\phi_m^Y)\) are rigid-motion invariant and
permutation equivariant by construction.

\paragraph{Push-forward prediction and plan construction.}
Following the coupled-scalarization structure of \(\minGSGW\), the
amortized model maps \((\Phi^X,\Phi^Y)\) to push-forward values
\(s^\theta(X,Y)\in\mathbb{R}^n\) and \(t^\theta(X,Y)\in\mathbb{R}^m\)
via a shared-weight transformer encoder and a cross-attending push-forward head
(architecture details in Appendix~\ref{app:arch}).
The push-forward values induce the soft plan \(\pi_\tau^\theta\) during training
and the hard plan \(\pi^\theta\) at inference, exactly as in
Section~\ref{sec:method}.

\paragraph{Training objective.}
Although the coupling is constructed through the \(\minGSGW\) slicer
construction, we evaluate it under the \emph{fused} GW objective:
\[
\mathcal{L}_{\mathrm{FGW}}(\pi)
\;=\;
(1-\lambda)\!\!\sum_{i,i',j,j'}
\bigl(C_X(i,i')-C_Y(j,j')\bigr)^2\pi_{ij}\pi_{i'j'}
+\lambda\sum_{i,j}\|\phi_i^X-\phi_j^Y\|_2^2\,\pi_{ij},
\]
The model is trained end-to-end by minimizing the expected loss:
\[
\min_\theta\;
\mathbb{E}_{(\mu,\nu)}
\Bigl[
  \mathcal{L}_{\mathrm{FGW}}\!\bigl(\pi_\tau^\theta(\mu,\nu)\bigr)
\Bigr].
\]

\begin{figure*}[th!]
    \centering
    \includegraphics[width=\textwidth]{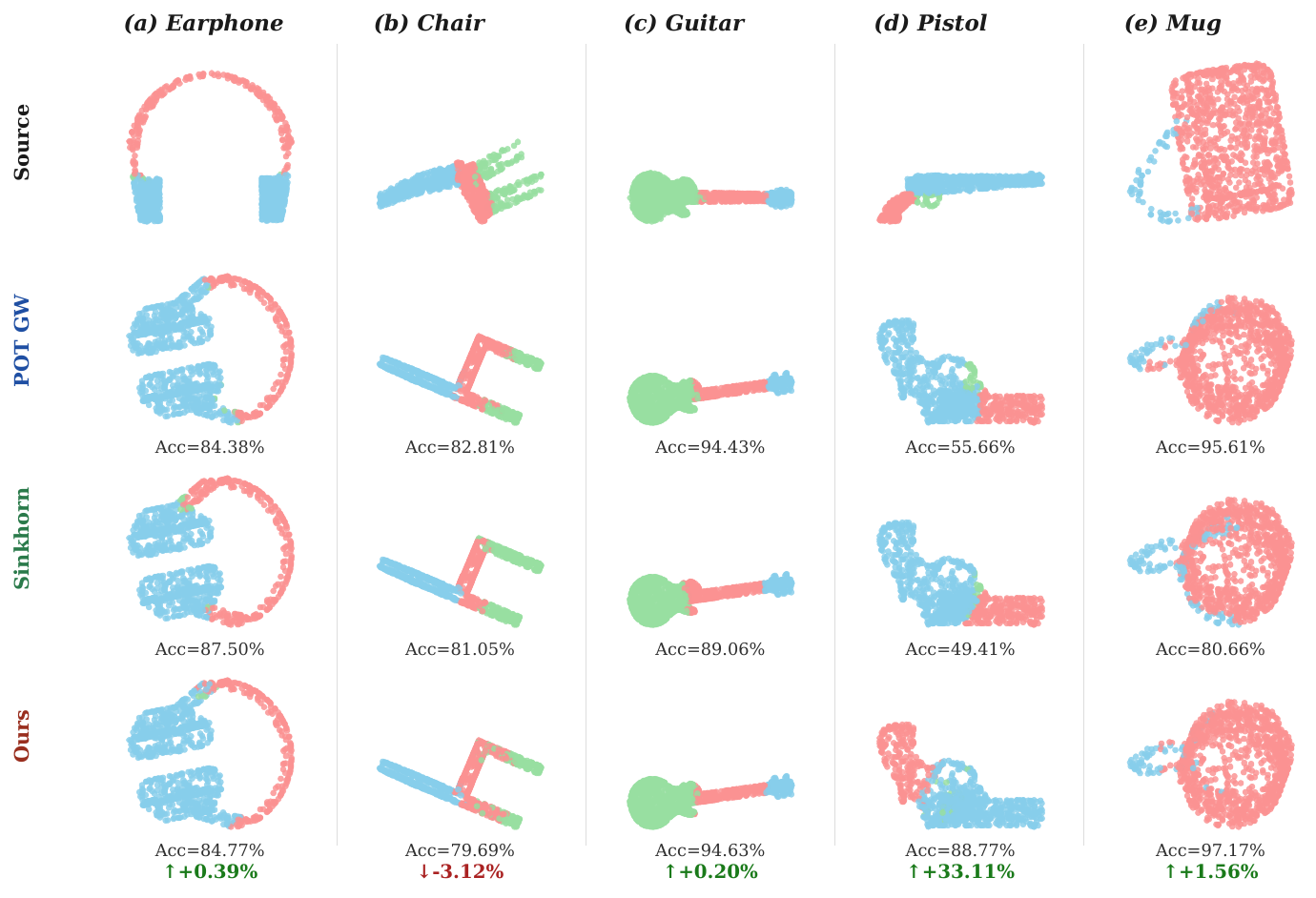}
    \caption{
    \textbf{Qualitative correspondences on ShapeNet.}
    Each column shows one object category, while rows show the source shape, POT GW, GW Sinkhorn, and our method.
    Source colors are propagated to the target via the estimated coupling.
    Bottom annotations report part-matching accuracy, and the last row also reports the accuracy difference between our method and POT GW.
    \vspace{-0.1in}}
    \label{fig:shapenet_qualitative_transposed}
\end{figure*}

\begin{table}[th!]
\centering
\caption{Mean part label transfer accuracy and forward time per pair at varying point resolutions $N$.}
\label{tab:shapenet_resolution}
\small
\setlength{\tabcolsep}{3.5pt}
\begin{tabular}{lcccccc}
\toprule
& \multicolumn{2}{c}{$N=256$} & \multicolumn{2}{c}{$N=512$} & \multicolumn{2}{c}{$N=1024$} \\
\cmidrule(lr){2-3}\cmidrule(lr){4-5}\cmidrule(lr){6-7}
Method & Acc. (\%) $\uparrow$ & Fwd. (ms) $\downarrow$ & Acc. (\%) $\uparrow$ & Fwd. (ms) $\downarrow$ & Acc. (\%) $\uparrow$ & Fwd. (ms) $\downarrow$ \\
\midrule
POT GW & 68.9 & 75.3 & 69.8 & 375.7 & 73.5 & 886.2 \\
Sinkhorn ($\varepsilon = 0.05$) & 66.1 & 164.7 & 66.6 & 361.1 & 71.3 & 843.1 \\
Sinkhorn ($\varepsilon = 0.5$)  & 64.3 & 91.7 & 65.5 & 370.2 & 68.7 & 821.9 \\
Sinkhorn ($\varepsilon = 1$)    & 62.8 & 72.3 & 64.7 & 303.3 & 67.4 & 790.2 \\
\midrule
\textbf{Ours} & \textbf{78.2} & \textbf{2.66} & \textbf{77.5} & \textbf{4.3} & \textbf{74.9} & \textbf{9.4} \\
\bottomrule
\end{tabular}

\end{table}

\section{Conclusion}

We introduced $\minGSGW$, a scalable GW formulation that induces explicit
transport plans by sorting slicer push-forward values, bypassing iterative
coupling updates at inference. By coupling the two slicers through a shared map
and lifting, the method avoids independently chosen projections and minimizes
the original GW loss over a richer family of monotone couplings than linear
slicing affords, yielding a valid upper bound on GW with rigid-motion invariance
under natural stability assumptions. The framework supports both per-instance
optimization and amortized feed-forward matching. Across mesh correspondence,
shape interpolation, and ShapeNet part segmentation, $\minGSGW$ produces
geometrically meaningful couplings with competitive GW objectives while scaling
substantially more favorably than classical plan-producing solvers. These results
suggest that expressive slicer-induced monotone couplings offer a practical path
toward structure-aware matching at resolutions where dense GW optimization
becomes prohibitive.

\paragraph{Broader Impact.} This work advances scalable approximations to Gromov--Wasserstein distance, with potential applications in shape analysis, molecular alignment, and multi-modal data integration. As with any method that reduces the cost of large-scale geometric matching, broader adoption may introduce risks in high-stakes domains, where approximation error could have downstream consequences.

\clearpage

\section*{Acknowledgment}
SK acknowledges support from the NSF CAREER Award No.\ 2339898 and AS acknowledges support from Lambda Labs through a Lambda Cloud Research Credit award.

\bibliographystyle{plain}
\bibliography{references}

@misc{
pan2026maxmin,
title={Max-Min Sliced Gromov-Wasserstein},
author={Wen-Xin Pan and Isabel Haasler and Hei Victor Cheng},
year={2026},
url={https://openreview.net/forum?id=yvbWfcnt0U}
}

@article{memoli2018gromov,
  title={Gromov-Monge quasi-metrics and distance distributions},
  author={M{\'e}moli, Facundo and Needham, Tom},
  journal={arXiv},
  volume={2018},
  year={2018}
}

@inproceedings{NEURIPS2019_a9cc6694,
 author = {Titouan, Vayer and Flamary, R\'{e}mi and Courty, Nicolas and Tavenard, Romain and Chapel, Laetitia},
 booktitle = {Advances in Neural Information Processing Systems},
 editor = {H. Wallach and H. Larochelle and A. Beygelzimer and F. d\textquotesingle Alch\'{e}-Buc and E. Fox and R. Garnett},
 pages = {},
 publisher = {Curran Associates, Inc.},
 title = {Sliced Gromov-Wasserstein},
 url = {https://proceedings.neurips.cc/paper_files/paper/2019/file/a9cc6694dc40736d7a2ec018ea566113-Paper.pdf},
 volume = {32},
 year = {2019}
}

@article{beinert2023assignment,
  title={On assignment problems related to Gromov--Wasserstein distances on the real line},
  author={Beinert, Robert and Heiss, Cosmas and Steidl, Gabriele},
  journal={SIAM Journal on Imaging Sciences},
  volume={16},
  number={2},
  pages={1028--1032},
  year={2023},
  publisher={SIAM}
}

@article{journals/focm/Memoli11,
  author = {Mémoli, Facundo},
  ee = {http://dx.doi.org/10.1007/s10208-011-9093-5},
  journal = {Foundations of Computational Mathematics},
  number = 4,
  pages = {417-487},
  title = {Gromov-Wasserstein Distances and the Metric Approach to Object Matching.},
  url = {http://dblp.uni-trier.de/db/journals/focm/focm11.html#Memoli11},
  volume = 11,
  year = 2011
}

@InProceedings{pmlr-v48-peyre16,
  title = 	 {Gromov-Wasserstein Averaging of Kernel and Distance Matrices},
  author = 	 {Peyré, Gabriel and Cuturi, Marco and Solomon, Justin},
  booktitle = 	 {Proceedings of The 33rd International Conference on Machine Learning},
  pages = 	 {2664--2672},
  year = 	 {2016},
  editor = 	 {Balcan, Maria Florina and Weinberger, Kilian Q.},
  volume = 	 {48},
  series = 	 {Proceedings of Machine Learning Research},
  address = 	 {New York, New York, USA},
  month = 	 {20--22 Jun},
  publisher =    {PMLR},
  url = 	 {https://proceedings.mlr.press/v48/peyre16.html}
}

@InProceedings{pmlr-v162-scetbon22b,
  title = 	 {Linear-Time Gromov {W}asserstein Distances using Low Rank Couplings and Costs},
  author =       {Scetbon, Meyer and Peyr{\'e}, Gabriel and Cuturi, Marco},
  booktitle = 	 {Proceedings of the 39th International Conference on Machine Learning},
  pages = 	 {19347--19365},
  year = 	 {2022},
  editor = 	 {Chaudhuri, Kamalika and Jegelka, Stefanie and Song, Le and Szepesvari, Csaba and Niu, Gang and Sabato, Sivan},
  volume = 	 {162},
  series = 	 {Proceedings of Machine Learning Research},
  month = 	 {17--23 Jul},
  publisher =    {PMLR},
  url = 	 {https://proceedings.mlr.press/v162/scetbon22b.html}
}

@inproceedings{
	chen2023semidefinite,
	title={Semidefinite Relaxations of the Gromov-Wasserstein Distance},
	author={Chen, Junyu and Nguyen, Binh and Koh, Shang Hui and Soh, Yong Sheng},
	booktitle={The Thirty-eighth Annual Conference on Neural Information Processing Systems},
	year={2024},
	url={https://openreview.net/forum?id=rM3FFH1mqk}
}

@article{Dumont_2024,
   title={On the Existence of Monge Maps for the Gromov–Wasserstein Problem},
   volume={25},
   ISSN={1615-3383},
   url={http://dx.doi.org/10.1007/s10208-024-09643-0},
   DOI={10.1007/s10208-024-09643-0},
   number={2},
   journal={Foundations of Computational Mathematics},
   publisher={Springer Science and Business Media LLC},
   author={Dumont, Théo and Lacombe, Théo and Vialard, François-Xavier},
   year={2024},
   month=feb, pages={463–510} }

@inproceedings{
zhang2023duality,
title={Duality and Sample Complexity for the Gromov-Wasserstein Distance},
author={Zhengxin Zhang and Ziv Goldfeld and Youssef Mroueh and Bharath Sriperumbudur},
booktitle={NeurIPS 2023 Workshop Optimal Transport and Machine Learning},
year={2023},
url={https://openreview.net/forum?id=RYPNAOWANk}
}

@inproceedings{lapsum2025,
  title={LapSum - One Method to Differentiate Them All: Ranking, Sorting and Top-k Selection},
  author={\L{}ukasz Struski and Micha\l{} B. Bednarczyk and Igor T. Podolak and Jacek Tabor},
  booktitle={The International Conference on Machine Learning (ICML) 2025},
  year={2025}
}

@InProceedings{pmlr-v28-jaggi13,
  title = 	 {Revisiting {Frank-Wolfe}: Projection-Free Sparse Convex Optimization},
  author = 	 {Jaggi, Martin},
  booktitle = 	 {Proceedings of the 30th International Conference on Machine Learning},
  pages = 	 {427--435},
  year = 	 {2013},
  editor = 	 {Dasgupta, Sanjoy and McAllester, David},
  volume = 	 {28},
  number =       {1},
  series = 	 {Proceedings of Machine Learning Research},
  address = 	 {Atlanta, Georgia, USA},
  month = 	 {17--19 Jun},
  publisher =    {PMLR},
  url = 	 {https://proceedings.mlr.press/v28/jaggi13.html}
}

@inproceedings{NIPS2013_af21d0c9,
 author = {Cuturi, Marco},
 booktitle = {Advances in Neural Information Processing Systems},
 editor = {C.J. Burges and L. Bottou and M. Welling and Z. Ghahramani and K.Q. Weinberger},
 pages = {},
 publisher = {Curran Associates, Inc.},
 title = {Sinkhorn Distances: Lightspeed Computation of Optimal Transport},
 url = {https://proceedings.neurips.cc/paper_files/paper/2013/file/af21d0c97db2e27e13572cbf59eb343d-Paper.pdf},
 volume = {26},
 year = {2013}
}

@techreport{shapenet2015,
  title       = {{ShapeNet: An Information-Rich 3D Model Repository}},
  author      = {Chang, Angel X. and Funkhouser, Thomas and Guibas, Leonidas and Hanrahan, Pat and Huang, Qixing and Li, Zimo and Savarese, Silvio and Savva, Manolis and Song, Shuran and Su, Hao and Xiao, Jianxiong and Yi, Li and Yu, Fisher},
  number      = {arXiv:1512.03012 [cs.GR]},
  institution = {Stanford University --- Princeton University --- Toyota Technological Institute at Chicago},
  year        = {2015}
}

@article{kerdoncuff2021sampled,
  title   = {Sampled Gromov Wasserstein},
  author  = {Kerdoncuff, Tanguy and Emonet, R{\'e}mi and Sebban, Marc},
  journal = {Machine Learning},
  volume  = {110},
  number  = {8},
  pages   = {2151--2186},
  year    = {2021},
  doi     = {10.1007/s10994-021-06035-1},
  publisher = {Springer}
}

@inproceedings{NEURIPS2019_6e62a992,
 author = {Xu, Hongteng and Luo, Dixin and Carin, Lawrence},
 booktitle = {Advances in Neural Information Processing Systems},
 editor = {H. Wallach and H. Larochelle and A. Beygelzimer and F. d\textquotesingle Alch\'{e}-Buc and E. Fox and R. Garnett},
 pages = {},
 publisher = {Curran Associates, Inc.},
 title = {Scalable Gromov-Wasserstein Learning for Graph Partitioning and Matching},
 url = {https://proceedings.neurips.cc/paper_files/paper/2019/file/6e62a992c676f611616097dbea8ea030-Paper.pdf},
 volume = {32},
 year = {2019}
}

@article{10.1145/1015706.1015736,
author = {Sumner, Robert W. and Popovi\'{c}, Jovan},
title = {Deformation transfer for triangle meshes},
year = {2004},
issue_date = {August 2004},
publisher = {Association for Computing Machinery},
address = {New York, NY, USA},
volume = {23},
number = {3},
issn = {0730-0301},
url = {https://doi.org/10.1145/1015706.1015736},
doi = {10.1145/1015706.1015736},
journal = {ACM Trans. Graph.},
month = aug,
pages = {399–405},
numpages = {7}
}

@article{10.1145/2897824.2925903,
author = {Solomon, Justin and Peyr\'{e}, Gabriel and Kim, Vladimir G. and Sra, Suvrit},
title = {Entropic metric alignment for correspondence problems},
year = {2016},
issue_date = {July 2016},
publisher = {Association for Computing Machinery},
address = {New York, NY, USA},
volume = {35},
number = {4},
issn = {0730-0301},
url = {https://doi.org/10.1145/2897824.2925903},
doi = {10.1145/2897824.2925903},
journal = {ACM Trans. Graph.},
month = jul,
articleno = {72},
numpages = {13}
}

@misc{liu2025efficienttransferableoptimaltransport,
      title={Efficient Transferable Optimal Transport via Min-Sliced Transport Plans}, 
      author={Xinran Liu and Elaheh Akbari and Rocio Diaz Martin and Navid NaderiAlizadeh and Soheil Kolouri},
      year={2025},
      eprint={2511.19741},
      archivePrefix={arXiv},
      primaryClass={cs.CV},
      url={https://arxiv.org/abs/2511.19741}, 
}

@inproceedings{ICLR2025_98db6567,
 author = {Liu, Xinran and Diaz Martin, Rocio and Bai, Yikun and Shahbazi, Ashkan and Thorpe, Matthew and Aldroubi, Akram and Kolouri, Soheil},
 booktitle = {International Conference on Learning Representations},
 editor = {Y. Yue and A. Garg and N. Peng and F. Sha and R. Yu},
 pages = {60948--60971},
 title = {Expected Sliced Transport Plans},
 url = {https://proceedings.iclr.cc/paper_files/paper/2025/file/98db6567a141db93b3cdeb177da8ab37-Paper-Conference.pdf},
 volume = {2025},
 year = {2025}
}

@article{flamary2021pot,
  author  = {R{\'e}mi Flamary and Nicolas Courty and Alexandre Gramfort and Mokhtar Z. Alaya and Aur{\'e}lie Boisbunon and Stanislas Chambon and Laetitia Chapel and Adrien Corenflos and Kilian Fatras and Nemo Fournier and L{\'e}o Gautheron and Nathalie T.H. Gayraud and Hicham Janati and Alain Rakotomamonjy and Ievgen Redko and Antoine Rolet and Antony Schutz and Vivien Seguy and Danica J. Sutherland and Romain Tavenard and Alexander Tong and Titouan Vayer},
  title   = {POT: Python Optimal Transport},
  journal = {Journal of Machine Learning Research},
  year    = {2021},
  volume  = {22},
  number  = {78},
  pages   = {1-8},
  url     = {http://jmlr.org/papers/v22/20-451.html}
}

@misc{chowdhury2021quantizedgromovwasserstein,
      title={Quantized Gromov-Wasserstein}, 
      author={Samir Chowdhury and David Miller and Tom Needham},
      year={2021},
      eprint={2104.02013},
      archivePrefix={arXiv},
      primaryClass={cs.LG},
      url={https://arxiv.org/abs/2104.02013}, 
}

@InProceedings{vay2019fgw,
  title =    {Optimal Transport for structured data with application on graphs},
  author =   {Titouan, Vayer and Courty, Nicolas and Tavenard, Romain and Laetitia, Chapel and Flamary, R{\'e}mi},
  booktitle =    {Proceedings of the 36th International Conference on Machine Learning},
  pages =    {6275--6284},
  year =   {2019},
  editor =   {Chaudhuri, Kamalika and Salakhutdinov, Ruslan},
  volume =   {97},
  series =   {Proceedings of Machine Learning Research},
  address =    {Long Beach, California, USA},
  month =    {09--15 Jun},
  publisher =    {PMLR},
  url =    {http://proceedings.mlr.press/v97/titouan19a.html}
}

@inproceedings{NEURIPS2023_6f1346ba,
 author = {Mahey, Guillaume and Chapel, Laetitia and Gasso, Gilles and Bonet, Cl\'{e}ment and Courty, Nicolas},
 booktitle = {Advances in Neural Information Processing Systems},
 editor = {A. Oh and T. Naumann and A. Globerson and K. Saenko and M. Hardt and S. Levine},
 pages = {35350--35385},
 publisher = {Curran Associates, Inc.},
 title = {Fast Optimal Transport through Sliced Generalized Wasserstein Geodesics},
 url = {https://proceedings.neurips.cc/paper_files/paper/2023/file/6f1346bac8b02f76a631400e2799b24b-Paper-Conference.pdf},
 volume = {36},
 year = {2023}
}

@inproceedings{rowland2019orthogonal,
  title={Orthogonal Estimation of Wasserstein Distances},
  author={Rowland, Mark and Hron, Jiri and Matthews, Alexander G. de G. and Ghahramani, Zoubin},
  booktitle={International Conference on Artificial Intelligence and Statistics},
  series={Proceedings of Machine Learning Research},
  volume={89},
  pages={186--195},
  year={2019},
  url={https://proceedings.mlr.press/v89/rowland19a.html}
}

@inproceedings{
chapel2026differentiable,
title={Differentiable Generalized Sliced Wasserstein Plans},
author={Laetitia Chapel and Romain Tavenard and Samuel Vaiter},
booktitle={The Thirty-ninth Annual Conference on Neural Information Processing Systems},
year={2026},
url={https://openreview.net/forum?id=hKKbtN4cp9}
}

\clearpage
\appendix

\section{Ablation Study}
\label{app:ablation}

We ablate two orthogonal design choices of \(\minGSGW\): whether the slicer family is
\textbf{linear} or \textbf{nonlinear}, and whether slicers are \textbf{independent} or
\textbf{dependent} (jointly optimized). This yields a $2\times2$ factorial ablation evaluated
on the realistic shape matching benchmark, reported in Table~\ref{tab:ablation}. Nonlinearity
and coupling each provide an independent gain in geodesic error: linear slicers fail to
capture curved geodesic geometry, while independent slicers collapse onto redundant directions
and lose diverse coverage of the metric-measure space. The full model (nonlinear + coupled)
benefits from both.

\begin{table}[ht!]
\centering
\small
\caption{Ablation of slicer design on the realistic shape matching benchmark.
Geo.: geodesic error $(\downarrow)$; Time: forward runtime in seconds $(\downarrow)$.
H, E, C denote Horse, Elephant, Cat.}
\setlength{\tabcolsep}{5pt}
\renewcommand{\arraystretch}{1.05}
\resizebox{\linewidth}{!}{%
\begin{tabular}{llrr|rr|rr|rr|rr|rr}
\toprule
& & \multicolumn{2}{c|}{H--H}
& \multicolumn{2}{c|}{E--E}
& \multicolumn{2}{c|}{C--C}
& \multicolumn{2}{c|}{H--E}
& \multicolumn{2}{c|}{C--H}
& \multicolumn{2}{c}{C--E} \\
\cmidrule(lr){3-4}\cmidrule(lr){5-6}\cmidrule(lr){7-8}
\cmidrule(lr){9-10}\cmidrule(lr){11-12}\cmidrule(lr){13-14}
Slicer & Relation
& Geo. & Time & Geo. & Time & Geo. & Time
& Geo. & Time & Geo. & Time & Geo. & Time \\
\midrule
\multirow{2}{*}{Linear}
& Independent
& 0.162 & 0.06 & 0.178 & 0.06 & 0.121 & 0.05
& 0.238 & 0.06 & 0.267 & 0.05 & 0.209 & 0.06 \\
& Dependent
& 0.134 & 0.06 & 0.147 & 0.06 & 0.098 & 0.05
& 0.203 & 0.06 & 0.231 & 0.05 & 0.178 & 0.06 \\
\midrule
\multirow{2}{*}{Nonlinear}
& Independent
& 0.108 & 0.08 & 0.119 & 0.08 & 0.081 & 0.07
& 0.172 & 0.08 & 0.198 & 0.07 & 0.153 & 0.08 \\
& Dependent
& \textbf{0.079} & \textbf{0.08} & \textbf{0.091} & \textbf{0.08} & \textbf{0.058} & \textbf{0.07}
& \textbf{0.138} & \textbf{0.08} & \textbf{0.162} & \textbf{0.07} & \textbf{0.124} & \textbf{0.08} \\
\bottomrule
\end{tabular}}
\label{tab:ablation}
\end{table}

\section{Toy correspondences.}
\label{app:toy_correspondences}
Figure~\ref{fig:toy_correspondences} compares exact GW against our method on four synthetic 2D-to-3D pairs. The experiment is qualitative: it shows that the nonlinear score construction is expressive enough to recover clean, globally coherent matches despite searching over a restricted coupling family. Unlike random one-dimensional projections, the shared map-lifting construction induces compatible orderings, yielding more structured correspondences across all four examples.

\begin{figure}[ht!]
    \centering
    \vspace{-0.1in}
    \includegraphics[width=1\textwidth]{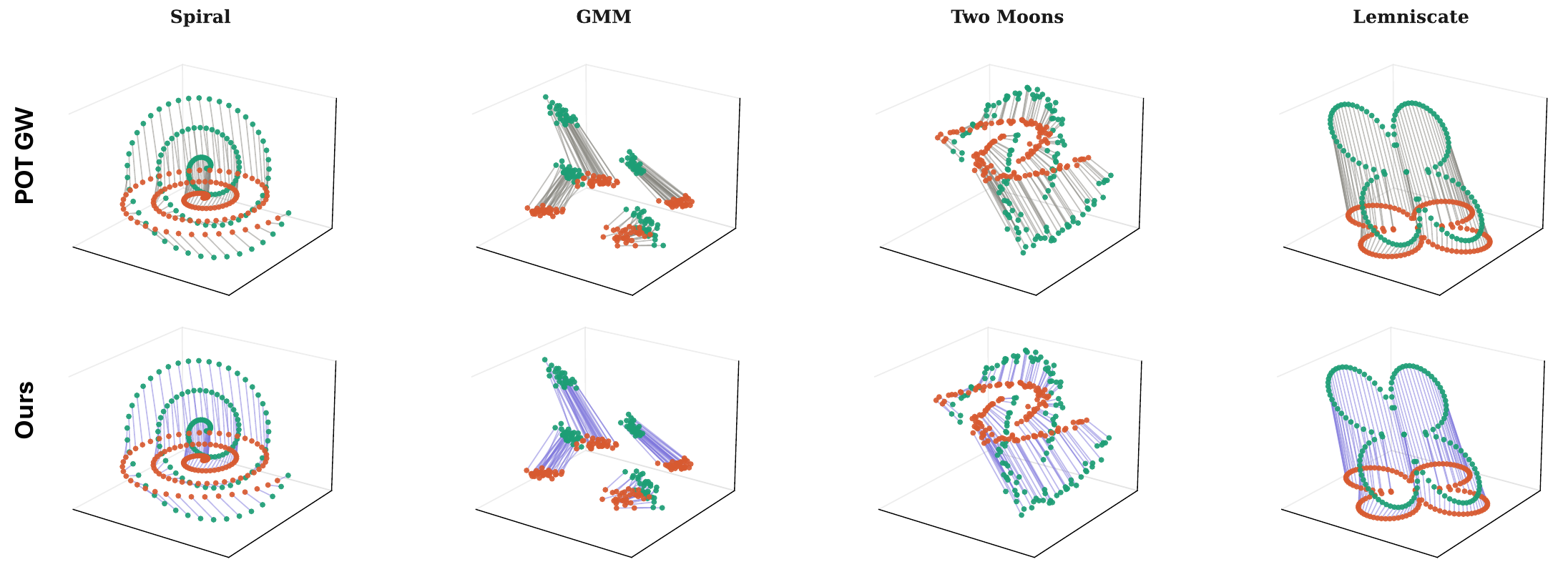}
    \caption{Qualitative comparison on four toy datasets. Top: GW correspondences. Bottom: ours. Source points lie on $z=0$; targets in $\mathbb{R}^3$. Our method produces cleaner, more structurally aligned matches.}
    \label{fig:toy_correspondences}
\end{figure}
\section{Experimental details and reproducibility}
\label{app:exp_details}

We report the implementation choices needed to reproduce the experiments. All methods use uniform marginals, and all optimization is implemented in PyTorch with CUDA used when available. Unless otherwise stated, each number reported in the main tables is averaged over three random seeds, \(\{42, 7, 77\}\).

\paragraph{Animal mesh matching.}
For the animal mesh correspondence experiments, we compute pairwise geodesic distance matrices from the mesh graph using Euclidean edge lengths, symmetrize the resulting distances, and compare our learned sorting based plan against other baselines. The effective hyperparameters are listed in Table~\ref{tab:animal_repro}.

\begin{figure}[t]
    \centering
    \includegraphics[width=\linewidth]{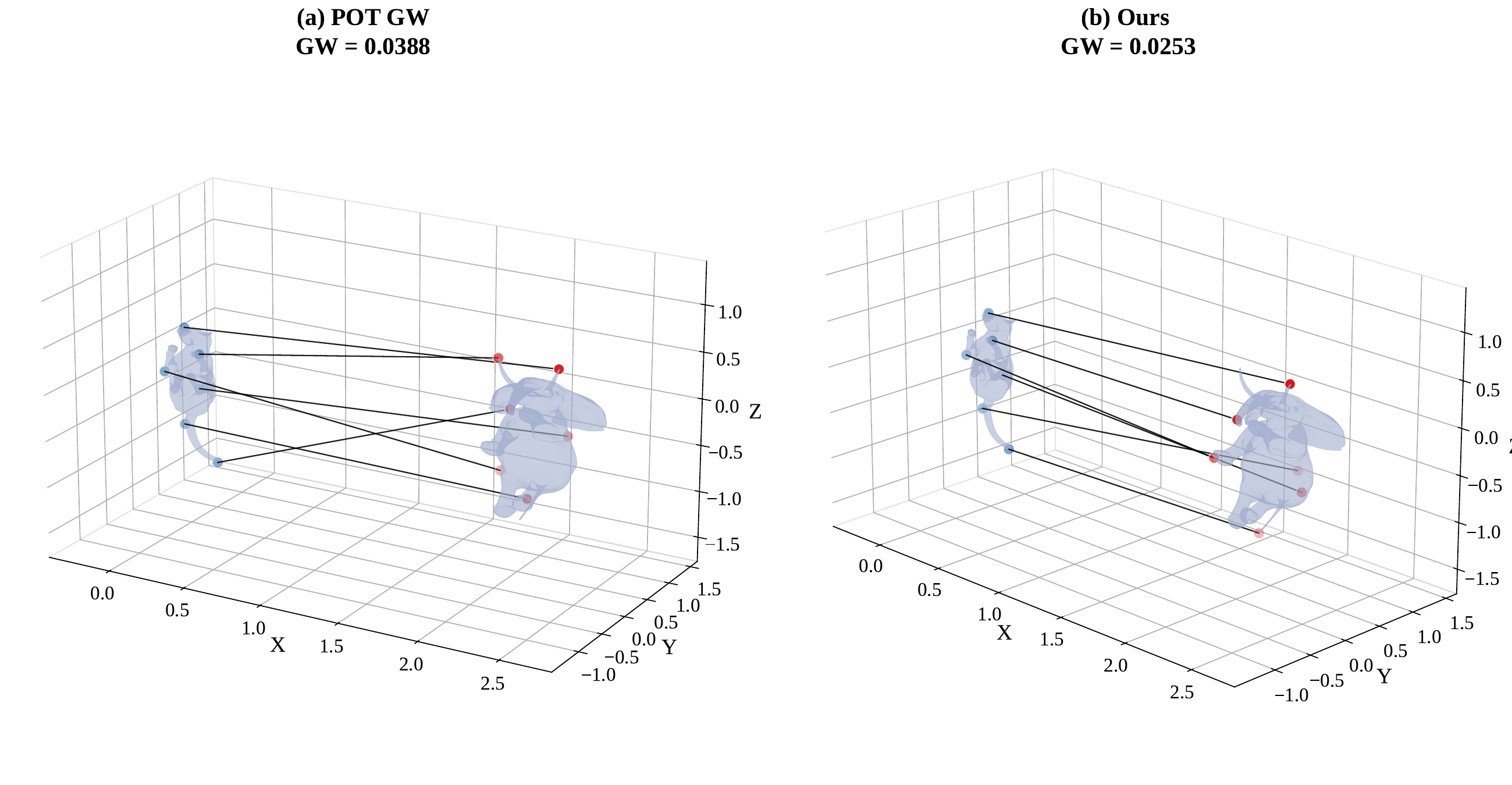}
    \vspace{-0.5in}
    \caption{\textbf{Shape correspondence on animal meshes.}
        Landmark correspondences (18 total, 6 shown) computed from geodesic distance matrices.
        Blue denotes source landmarks, red denotes predicted target landmarks, and black indicates correspondences.
            \textbf{(a)} POT GW. \textbf{(b)} Ours.\vspace{-0.1in}}
    \label{fig:shape_correspondence}
\end{figure}

\begin{table}[ht!]
\centering
\small
\setlength{\tabcolsep}{5pt}
\renewcommand{\arraystretch}{1.08}
\caption{Reproducibility details for animal mesh matching.}
\label{tab:animal_repro}
\begin{tabular}{lll}
\toprule
\textbf{Component} & \textbf{Setting} & \textbf{Value} \\
\midrule
Data & Mesh formats & \texttt{.obj}, \texttt{.off} \\
Landmarks & \texttt{n\_land} & 18 \\
Landmarks & \texttt{n\_rep} & 4 \\
Model & Hidden width & 1024 \\
Model & Depth & 6 \\
Model & Random Fourier features & 128 \\
Optimization & Optimizer & Adam \\
Optimization & Learning rate & $10^{-4}$ \\
Optimization & Steps & 1500 \\
Optimization & Gradient clipping & 1.0 \\
Sorting & Initial temperature & $10^{-4}$ \\
Baseline & POT solver & \texttt{gromov\_wasserstein} \\
Baseline & POT loss & \texttt{square\_loss} \\
Baseline & MSGW directions & 500 \\
\bottomrule
\end{tabular}
\end{table}

\paragraph{Horse mesh interpolation.}
For horse interpolation, meshes are centered and scaled by their maximum norm. Geodesic costs are computed on a symmetrized kNN graph with Euclidean edge weights and Dijkstra shortest paths. We train multiple random restarts and use the best hard plan for barycentric interpolation. The effective settings are given in Table~\ref{tab:horse_repro}.

\begin{table}[ht!]
\centering
\small
\setlength{\tabcolsep}{5pt}
\renewcommand{\arraystretch}{1.08}
\caption{Reproducibility details for horse mesh interpolation.}
\label{tab:horse_repro}
\begin{tabular}{lll}
\toprule
\textbf{Component} & \textbf{Setting} & \textbf{Value} \\
\midrule
Data & Mesh indices & $\{0,1,2\}$ \\
Data & Mesh formats & \texttt{.npy}, \texttt{.off} \\
Geodesics & Graph & symmetrized kNN \\
Geodesics & Neighbors & 20 \\
Geodesics & Solver & Dijkstra shortest path \\
Geodesics & Normalization & $[0,1]$ \\
Model & Hidden width & 512 \\
Model & Depth & 4 \\
Model & Expansion & 2 \\
Model & Dropout & 0.0 \\
Optimization & Optimizer & AdamW \\
Optimization & Steps & 1000 \\
Optimization & Restarts & 3 \\
Optimization & Learning rate & $3\times 10^{-3}$ \\
Optimization & Weight decay & $10^{-4}$ \\
Optimization & Warmup steps & 50 \\
Optimization & Gradient clipping & 5.0 \\
Annealing & \(\alpha_{\mathrm{start}}\) & 1.0 \\
Annealing & \(\alpha_{\mathrm{end}}\) & 0.03 \\
\bottomrule
\end{tabular}
\end{table}

\paragraph{Amortized ShapeNet matching.}
For the amortized setting, we train a shared model that predicts scalar scores for each input pair and induces a transport plan through differentiable sorting. The input point features include coordinates, normals, and an intrinsic descriptor obtained from sorted squared intra shape distances. The experiment uses the following ShapeNet categories: airplane, bag, cap, car, chair, earphone, guitar, knife, lamp, laptop, motorbike, mug, pistol, rocket, skateboard, and table. The training and architecture details needed for reproduction are summarized in Table~\ref{tab:shapenet_repro}.

\begin{table}[ht!]
\centering
\small
\setlength{\tabcolsep}{4pt}
\renewcommand{\arraystretch}{1.08}
\caption{Reproducibility details for amortized ShapeNet matching.}
\label{tab:shapenet_repro}
\begin{tabularx}{\linewidth}{p{0.24\linewidth}p{0.35\linewidth}X}
\toprule
\textbf{Component} & \textbf{Setting} & \textbf{Value} \\
\midrule
Data & Splits & \texttt{train} / \texttt{val} \\
Data & Points per shape & 1024 \\
Data & Batch size / workers & 4 / 4 \\
Data & Pairs per epoch / validation pairs & 2000 / 400 \\
Features & Input features & Intrinsic descriptor \(\phi\) \\
Features & Intrinsic descriptor dimension & 64 \\
Features & Coordinate normalization & centering and max norm scaling \\
Model & Token / latent dimension & 70 / 256 \\
Model & Attention heads & 4 \\
Model & Layer pattern & self, self plus cross, self plus cross \\
Model & Score head & symmetric cross-attending head \\
Optimization & Optimizer & AdamW \\
Optimization & Epochs & 500 \\
Optimization & Learning rate / weight decay & \(10^{-3}\) / \(10^{-5}\) \\
Optimization & Warmup epochs / gradient clipping & 5 / 1.0 \\
Annealing & \(\alpha_{\mathrm{start}}\) / \(\alpha_{\mathrm{end}}\) & 0.05 / 0.005 \\

\bottomrule
\end{tabularx}
\end{table}

\paragraph{Toy correspondences.}
The toy correspondence experiment is qualitative and is included to visualize the induced correspondence structure. It uses four planar source examples mapped to target point clouds in \(\mathbb{R}^3\), with exact GW used as a reference baseline. Since this experiment is illustrative, we do not include additional implementation specific details beyond those needed to interpret the figure.

\section{Amortized \texorpdfstring{\(\minGSGW\)}{minGSGW} Architecture}
\label{app:arch}

We describe the full architecture of the amortized matcher introduced in
Section~\ref{sec:shapenet_amortized}.  The network maps intrinsic token
collections \(\Phi^X\in\mathbb{R}^{N\times d_{\mathrm{in}}}\) and
\(\Phi^Y\in\mathbb{R}^{M\times d_{\mathrm{in}}}\) to score vectors
\(s^\theta\in\mathbb{R}^N\) and \(t^\theta\in\mathbb{R}^M\), which induce
the soft coupling via LapSum~\citep{lapsum2025}.  It consists of five
components: a shared token embedding, a two-stream transformer encoder,
a symmetric pair-context branch, a cross-attending score head, and the
LapSum differentiable assignment layer.

\paragraph{Notation.}
Let \(\mathrm{SA}(H)=\mathrm{MHA}(H,H,H)\) denote self-attention and
\(\mathrm{CA}(H,H')=\mathrm{MHA}(H,H',H')\) cross-attention, where
\(\mathrm{MHA}\) uses softmax multi-head attention.  We define four
transformer block types, each with a GELU MLP and post-LayerNorm
residuals:

\begin{itemize}[leftmargin=1.5em,topsep=2pt,itemsep=1pt]
  \item \(\mathrm{S}\): self-attention only, applied independently to each stream with \emph{tied} weights.
  \item \(\mathrm{C}\): cross-attention only; each stream attends to the other.
  \item \(\mathrm{SC}\): self-attention followed by cross attention.
  \item \(\mathrm{CS}\): cross-attention followed by self-attention.
\end{itemize}
All blocks within the encoder share weights across the two streams.

\paragraph{Token embedding.}
A shared linear map \(\mathrm{E}:\mathbb{R}^{d_{\mathrm{in}}}\to\mathbb{R}^d\)
embeds each token independently:
\[
H_X^{(0)} = \mathrm{E}(\Phi^X)\in\mathbb{R}^{N\times d},
\qquad
H_Y^{(0)} = \mathrm{E}(\Phi^Y)\in\mathbb{R}^{M\times d}.
\]

\paragraph{Two-stream encoder.}
The encoder follows the block pattern
\([\mathrm{S},\,\mathrm{SC},\,\mathrm{SC}]\),
producing contextualised representations \((H_X, H_Y)\).
In all ShapeNet experiments we use hidden dimension \(d=256\) and
\(h=4\) attention heads.

\paragraph{Symmetric pair-context branch.}
To inject a global summary of the pair that is symmetric under swapping
\(X\leftrightarrow Y\), a shared set encoder
\(\mathcal{C}\) (two-layer pointwise MLP, weighted mean pooling, linear
projection to \(\mathbb{R}^{d_c}\)) produces
\[
c
=
\tfrac{1}{2}\bigl(\mathcal{C}(\Phi^X)+\mathcal{C}(\Phi^Y)\bigr)
\in\mathbb{R}^{d_c}.
\]
This vector is broadcast and concatenated with the encoder outputs, then
projected back to width \(d\) via a shared map
\(W_c:\mathbb{R}^{d+d_c}\to\mathbb{R}^d\):
\[
\widetilde{H}_X = W_c\!\left[H_X \,\|\, \mathbf{1}_N c^\top\right],
\qquad
\widetilde{H}_Y = W_c\!\left[H_Y \,\|\, \mathbf{1}_M c^\top\right].
\]
Symmetry of \(c\) ensures that swapping the two inputs produces
identically conditioned streams, consistent with the symmetry constraint
\(G_\theta(Y,X)=G_\theta(X,Y)^\top\) in
\eqref{eq:maxmin_sym}.

\paragraph{Cross-attending score head.}
A depth-two \(\mathrm{SC}\) stack with \emph{separate} parameters from
the encoder further mixes the context-conditioned streams:
\[
(\bar{H}_X,\,\bar{H}_Y)
=
\mathrm{SC}_2\circ\mathrm{SC}_1(\widetilde{H}_X,\,\widetilde{H}_Y).
\]
A shared scalar readout \(w:\mathbb{R}^d\to\mathbb{R}\) then produces
the score vectors:
\[
s^\theta = w(\bar{H}_X)\in\mathbb{R}^N,
\qquad
t^\theta = w(\bar{H}_Y)\in\mathbb{R}^M.
\]
Using a \emph{shared} readout enforces the design principle of
\(\minGSGW\): the two score systems are coupled through a common
scalarization rather than chosen independently.

\paragraph{LapSum differentiable assignment.}
The scores \((s^\theta, t^\theta)\) are passed to
LapSum~\citep{lapsum2025}, which maps them to soft permutation matrices
\(P_{X,\tau}\in\mathbb{R}^{N\times N}\) and
\(P_{Y,\tau}\in\mathbb{R}^{M\times M}\) at temperature \(\tau\).
These induce the soft matching plan
\[
\pi_\tau^\theta
=
P_{X,\tau}\,\Pi_0\,P_{Y,\tau}^\top,
\]
where \(\Pi_0\) is the canonical sorted plan, as defined in
Section~\ref{sec:method}.
At inference time we set \(\tau\to 0\), recovering the hard plan
\(\pi^\theta\).

\paragraph{Intrinsic tokenisation details.}
The set encoder \(\rho\) maps an unordered distance profile
\(\mathcal{D}_i^X=\{\|x_i-x_k\|^2:k\neq i\}\) to a fixed-size vector.
Concretely, \(\rho\) sorts the \(K\) nearest squared distances and
passes the resulting \(K\)-dimensional vector through a two-layer MLP
with GELU activations, yielding
\(\phi_i^X\in\mathbb{R}^{d_{\mathrm{in}}}\).
We use \(K=32\) in all experiments.  Because sorting is applied to
distances within a single point cloud and the MLP weights are shared,
the tokens are rigid-motion invariant and permutation equivariant as
argued in Section~\ref{sec:shapenet_amortized}.

\end{document}